\documentclass{article}
\usepackage{aditi}

\usepackage{microtype}
\usepackage{xspace}
\usepackage{hyperref}
\usepackage{url}
\usepackage{booktabs}

\usepackage{style}
\usepackage{algorithm}
\usepackage{algpseudocode}
\usepackage[most]{tcolorbox}
\usepackage{colortbl}
\usepackage{tablefootnote}
\usepackage{threeparttable}
\usepackage{tabularx}
\usepackage{multirow}
\usepackage{makecell}
\usepackage{listings}
\usepackage{stfloats}

\definecolor{skyblue}{RGB}{204,229,255}
\tcbset{
lightbluebox/.style={
colback=skyblue!70,
colframe=skyblue!75!black,
boxrule=1pt,
arc=4pt,
auto outer arc
}
}

\definecolor{TsinghuaPurple}{HTML}{660874}
\definecolor{mycolor_gray}{HTML}{ECECEC}
\definecolor{mydarkgreen}{RGB}{0,100,0}

\setContactAccent{TsinghuaPurple}

\hypersetup{
colorlinks=true,
citecolor=TsinghuaPurple,
linkcolor=TsinghuaPurple,
urlcolor=TsinghuaPurple
}

\AtBeginDocument{
\hypersetup{
colorlinks=true,
citecolor=TsinghuaPurple,
linkcolor=TsinghuaPurple,
urlcolor=TsinghuaPurple
}
}

\renewenvironment{abstract}{%
\vskip.02in
\begin{tcolorbox}[
enhanced,
colback=TsinghuaPurple!5!white,
colframe=TsinghuaPurple!5!white,
boxrule=0pt,
arc=8pt,
outer arc=8pt,
left=6mm,
right=6mm,
top=3.5mm,
bottom=4mm,
boxsep=0pt
]
\centerline{\large\bfseries Abstract}\vspace{0.3ex}%
}{%
\par
\end{tcolorbox}
\vskip 1ex
}

\DeclareCaptionFormat{ForumCaption}{%
\textbf{\color{TsinghuaPurple}#1#2}#3%
}
\newtcolorbox{mycustombox}[1]{
enhanced,
colback=black!5!white,
colframe=black!75!white,
boxrule=0.4pt,
coltitle=white,
title=#1,
titlerule=0.4pt,
fontupper=\small,
fonttitle=\small,
before upper={\par\smallskipamount},
breakable,
left=3mm,
right=3mm,
top=2mm,
bottom=2mm,
boxsep=1mm,
}

\setTitleruleGap{0.45pt}

\title{OmniVerifier-C1: Decomposed Graded Verifier for Rule-Governed Generative World Modeling}

\setauthors{
Bowei Liu$^{1,2*}$ \authorsep
Xinchen Zhang$^{1*\dagger}$ \authorsep
Xuhuan Li$^{1*}$ \authorsep
Tianlin Zhang$^{2}$ \authorsep
Kaian Jiang$^{1}$ \\ 
Jingwei Liu$^{2,4}$ \authorsep
Congyang Zhao$^{2}$ \authorsep 
Yifei Qian$^{2}$ \authorsep
Junhong Liu$^{3}$ \authorsep
Zhiheng Li$^{1}$ \\
Yujiu Yang$^{1\ddagger}$ \authorsep
Yang Cai$^{2}$ \authorsep 
Ling Yang$^{4\ddagger}$
}

\setaffils{
$^{1}$Tsinghua University \authorsep
$^{2}$ Amap, Alibaba Group \authorsep
$^{3}$ Microcyto \authorsep
$^{4}$ Peking University
\\
$^{*}$: Equal Contribution. \ \
$^{\dagger}$: Project Leader. \ \
$^{\ddagger}$: Corresponding Authors.
\vspace{-2mm}
}

\setcode{https://github.com/Cominclip/OmniVerifier}
\begin{document}

\maketitle

\vspace{-10mm}
\begin{abstract}

Generative visual verifiers play a crucial role in advancing multimodal intelligence. To enhance the robustness of visual verifiers in complex multimodal scenarios involving compositional requirements, we move beyond conventional binary verifiers that produce only True/False discrete judgments and introduce a verifier that automatically decomposes complex prompts or environmental rules into fine-grained components and assigns pointwise scores to provide a continuous and more informative critique signal. Our experiments further demonstrate that these structured and interpretable outputs improve the verifier's generalization to out-of-domain multimodal critic tasks.
To train this verifier, we introduce a Decoupled RL training strategy that jointly optimizes sub-question-level binary judgment and global alignment scoring, using precise local feedback to develop fine-grained verification capabilities that provide a foundation for reliable continuous scoring at the global level.
Under a general reinforcement learning framework, we first empirically demonstrate that the proposed pointwise graded verifier significantly outperforms binary verifiers in traditional text-to-image scenarios. We then extend our investigation to rule-governed generative world modeling where generative models are leveraged for world simulation and world reasoning under explicit rules. From a theoretical standpoint, we further show that a pairwise verification paradigm yields more accurate critiques than the pointwise formulation when ground-truth images are available, with empirical results on diverse structured world-modeling tasks such as Maze and Sudoku further validating the effectiveness of our approach. Overall, our findings suggest a key insight: the transition from generative models to structured world-modeling agents critically hinges on the availability of accurate pairwise graded visual verifiers.
\end{abstract}

\begin{figure}[h!!]
    \vspace{-5mm}
    \centerline{\includegraphics[width=\textwidth]{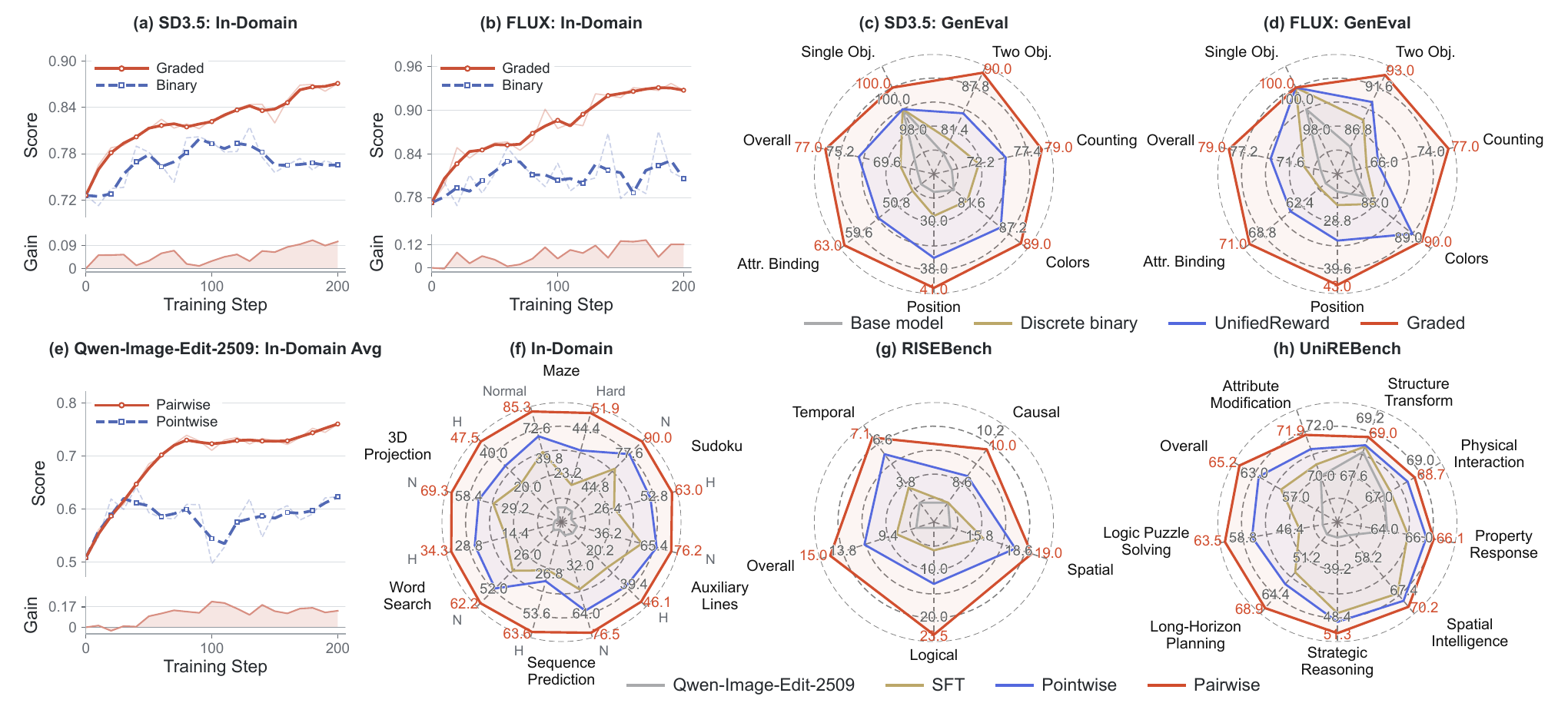}}
    \vspace{-4mm}
    \caption{Panels (a)--(d) highlight the advantages of the graded verifier over the binary verifier as a generative reward model, while panels (e)--(h) show that pairwise verification provides more reliable and accurate feedback signals than pointwise verification.}
    \vspace{-5mm}
    \label{fig1}
\end{figure}

\section{Introduction}
    \vspace{-2mm}
Multimodal generative models have made remarkable progress in recent years \citep{cao2025hunyuanimage,seedream2025seedream,chen2026semhitok}, producing visually compelling images conditioned on textual descriptions. Beyond visual fidelity \citep{wiedemer2025video, zhang2024realcompo, yang2025hermesflow}, a central challenge lies in whether generated images faithfully follow complex instructions and reflect coherent world knowledge \citep{zhou2025fireedit,jin2026unveilingfinegrainedvisualtraces, deng2025emerging, yang2025mmada,huang2026consistentid,zhang2026kvpo}. Motivated by this challenge, visual verifiers have emerged as a crucial component for assessing the correctness of generated results. By providing feedback on the alignment between images and textual descriptions, verifiers not only serve as evaluation tools, but also act as general critic models for inference-time scaling and reinforcement learning-based training. As generative models are increasingly applied to open-ended environments and reasoning-intensive tasks, the reliability and expressiveness of visual verifiers become a key bottleneck for advancing multimodal intelligence.

Despite their importance, most existing visual verifiers \citep{omniverifier, wang2025llava, xiong2025llava} adopt a binary judgment paradigm that produces only True/False decisions about whether an image satisfies a given prompt. Although simple and efficient, this design has several inherent limitations.
First, binary supervision provides coarse feedback that fails to capture the partial correctness of generated samples. By assigning the same negative label to imperfect samples of varying quality, binary rewards obscure meaningful quality differences and weaken the within-group learning signals required by group-relative reinforcement learning.
Moreover, the limited information conveyed by binary labels may encourage shortcut learning and increase susceptibility to reward hacking \citep{deepseekmathv2,MetaRM}, as models may exploit superficial patterns to predict the expected labels without adequately reasoning about the e subtl compositional relationships among objects, attributes, and spatial constraints.

To address these limitations, we propose the \textbf{Decomposed Graded Verifier (OmniVerifier-C1)}, which automatically decomposes complex prompts or environmental rules into fine-grained sub-questions and performs structured reasoning over each component. It assigns continuous scores based on the proportion of satisfied sub-questions, capturing partial correctness and providing informative critique signals. Our experiments demonstrate that decomposition-based structured reasoning significantly improves the verifier's out-of-domain multimodal critic capabilities. To train this verifier, we introduce a \textbf{Decoupled RL training strategy} that jointly optimizes sub-question-level binary judgment and global alignment scoring, using precise 0/1 feedback on individual sub-questions to develop strong fine-grained verification capabilities that support reliable global scoring. Moreover, our theoretical analysis shows that the decomposition-and-scoring paradigm better preserves the underlying true quality ordering among generated samples, providing finer-grained learning signals for group-relative reinforcement learning.

Building upon this verifier, we further explore its role as a generative reward model for reinforcement learning across two types of tasks. First, we apply the verifier to traditional text-to-image alignment, where it provides fine-grained feedback that encourages generative models to satisfy detailed compositional relationship. Second, we extend our study to the more reasoning-intensive and challenging setting of \textbf{structured generative world modeling}, where generative models are used to simulate environments and reason over sequential states. In these rule-governed environments, we find that pairwise verification, which compares generated states against ground-truth states, provides more accurate and reliable critiques than pointwise verification. This observation suggests that the effectiveness of visual verification in rule-governed world-modeling scenarios critically depends on the availability of reference states for comparison. Extensive experiments across diverse tasks, such as Maze and Sudoku, demonstrate that the proposed verifier provides more reliable reward signals and substantially improves reinforcement learning performance.

Our contributions can be summarized as follows:
\begin{itemize}

\item We propose the \textbf{Decomposed Graded Verifier}, which automatically decomposes complex prompts or environmental rules into fine-grained sub-questions, performs structured reasoning over each component, and assigns continuous scores based on the proportion of satisfied sub-questions. Our experiments demonstrate that its structured outputs significantly improve out-of-domain multimodal critic capabilities.

\item We introduce a \textbf{Decoupled RL training strategy} that jointly optimizes sub-question-level binary judgment and global alignment scoring, using precise 0/1 feedback on individual sub-questions to develop fine-grained verification capabilities that provide a foundation for reliable scoring.

    \item We demonstrate that the proposed verifier serves as an effective generative reward model for reinforcement learning, improving performance in both traditional text-to-image generation and rule-governed generative world modeling tasks.

    \item We provide empirical validation across diverse structured reasoning environments, showing that accurate \textbf{pairwise decomposed verifiers} are a key enabler for transforming generative models into rule-governed world-modeling agents.
    
\end{itemize}

\section{Related Work}
\vspace{-1mm}
\subsection{Generative Verifier or Reward Models}
\vspace{-1mm}
Generative reward models~\citep{Generativerewardmodels,Rewardbench2,tan2024judgebench,Rm-bench,MetaRM} differ fundamentally from traditional reward models \citep{xu2023imagereward,zhang2024generative,ouyang2022training, zhang2024itercomp} in that they produce explicit rationales rather than merely assigning scalar scores. OmniVerifier~\citep{omniverifier} first introduces the concept of Generative Universal Verifier and presents ViVerBench, a comprehensive and challenging benchmark for evaluating multimodal reasoning and critique capabilities. 
DeepSeekMath-V2~\citep{deepseekmathv2} further proposes meta-verification and integrates it into verifier training to enhance faithfulness and reliability.
OmniVerifier-M1 \citep{m1} extends meta-verification to multimodal scenarios by introducing symbolic spatial annotations, such as bounding boxes, points or lines as verifiable rationales. These rule-based supervision signals substantially alleviate reward hacking and enhance verifier generalization.
Despite their improved interpretability, existing generative verifiers still rely on coarse binary decisions as final judgments, and thus struggle to provide dense and optimization-friendly reward signals for reinforcement learning. 
In this paper, we introduce a Decoupled RL training strategy to train the Decomposed Graded Verifier, which provides reliable continuous scores through explicit fine-grained decomposition.

\subsection{Generative Model as World Reasoner}
\vspace{-1mm}
Multimodal generative models have made remarkable progress \citep{lin2025uniworld,wang2026liveedit,wang2026subjectanchor,zhang2026zo3t} in semantic alignment and aesthetic quality. Beyond visual fidelity, they are increasingly expected to serve as reasoning-capable world simulators \citep{sora2_system_card,nano_banana_pro_google,veo3_model_card}. 
To quantify the world reasoning capabilities of current generative models, several benchmarks \citep{t2i-corebench,t2i-reasonbench,han2026vigor} have been proposed. WISE \citep{wise} focuses on evaluating generative models' understanding of cultural commonsense and scientific knowledge.
MMGR \citep{mmgr} is designed to probe generative models' spatial, temporal, physical, and logical reasoning capabilities.
Motivated by these challenges, recent studies \citep{tan2026dreamworld,zhang2026robostereo,zhang2025mind} have explored new training paradigms  and input modalities to learn the underlying dynamics of world evolution.
Emu3.5 \citep{emu3.5} advances world modeling by learning native next-state prediction over long-horizon interleaved vision-language sequences.
UnityVideo \citep{huang2025unityvideo} improves world-aware video generation through unified multi-modal and multi-task learning across diverse visual representations.
In contrast to these efforts, we focus on improving the reliability and accuracy of reinforcement learning rewards for visual generation in rule-governed world-modeling scenarios, thereby encouraging generative models to develop stronger and more generalizable world reasoning capabilities.

\section{OmniVerifier-C1: Decomposed Graded Verifier} \label{sec 3}
\vspace{-1mm}
In this section, we first analyze the theoretical advantages of the graded verifier over the binary verifier as a generative reward model. We then empirically demonstrate that our verifier develops fine-grained and generalizable multimodal verification capabilities through the Decoupled RL strategy, which jointly optimizes sub-question-level binary judgment and global alignment scoring.

\subsection{Theoretical Advantage of Decomposed Graded Rewards for Generative RL} \label{sec 3.1}
\begin{figure}[t]
    \centerline{\includegraphics[width=\textwidth]{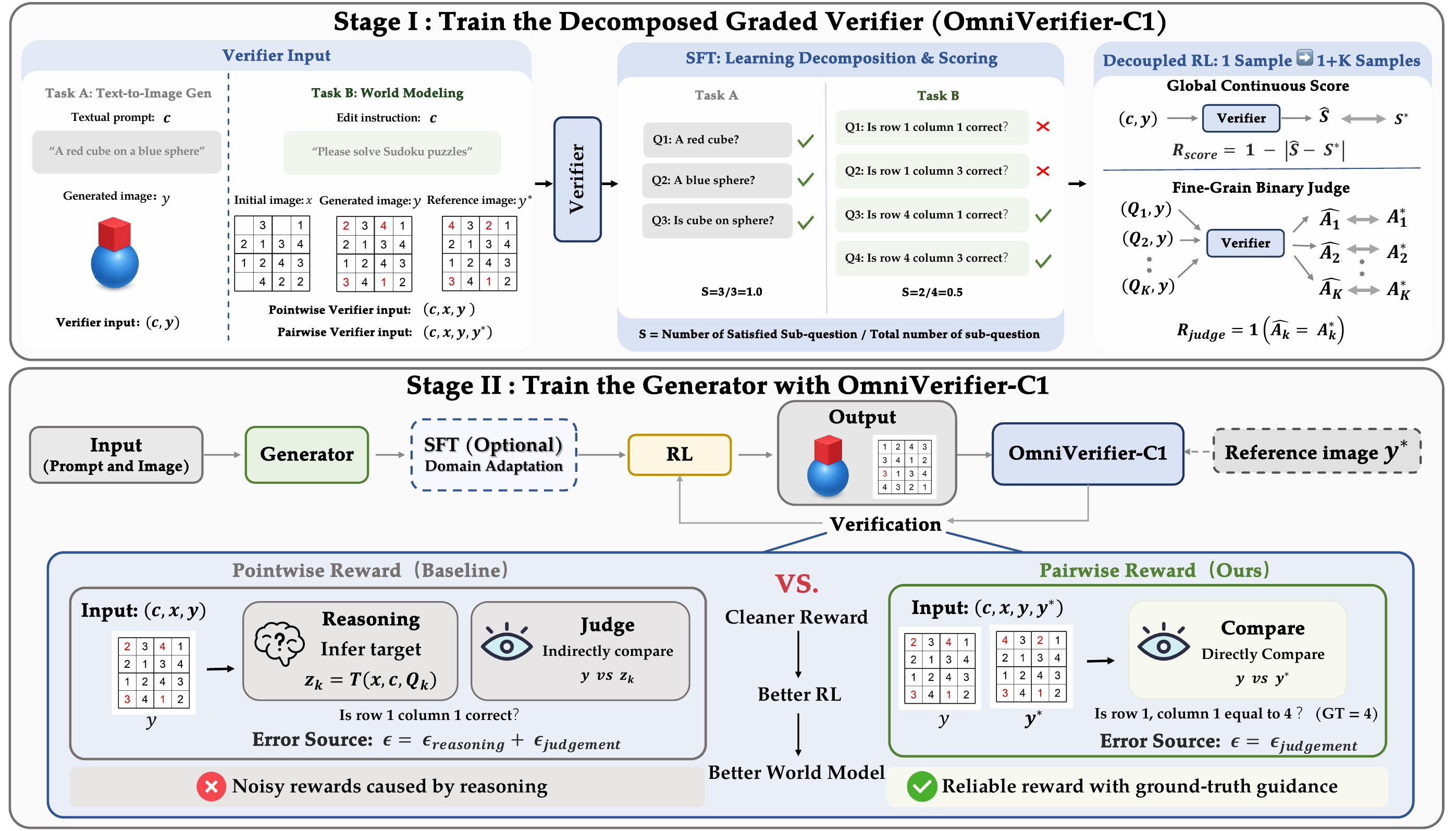}}
    \vspace{-2mm}
    \caption{\textbf{Overview of our training pipeline.}
Stage 1 trains the Decomposed Graded Verifier with decoupled reinforcement learning to produce fine-grained continuous rewards. Stage 2 introduces a pairwise judgment paradigm that provides cleaner and more reliable reward signals for generative model reinforcement learning.}
    \vspace{-2mm}
    \label{fig2_training_pipe}
\end{figure}

\paragraph{Drawbacks of Binary Reward.} 
Binary verifiers have emerged as an effective tool for improving text-to-image generation quality through inference-time scaling \citep{omniverifier}.
When they detect semantic inconsistencies between a generated image and its textual description, they produce editing instructions that guide the generative model to iteratively refine the visual output.
For this purpose, coarse binary feedback is often sufficient, as the verifier primarily serves as a gating mechanism that determines whether further refinement is needed.
However, binary verifiers face inherent limitations when used as reward models for reinforcement learning, as their feedback is insufficiently dense and discriminative to distinguish among imperfect samples of varying visual quality and thereby guide effective generative policy optimization.

\paragraph{Revisiting Decomposition-Based Continuous Rewards.} 

To address this limitation, we introduce the Decomposed Graded Verifier, a structured verification paradigm that elicits continuous scores from visual verifiers. Instead of reducing assessment to a single binary judgment, it decomposes a complex prompt into fine-grained sub-questions, evaluates each sub-question individually with binary judgments, and aggregates these local decisions into a continuous score that reflects the degree of prompt satisfaction. By preserving partial correctness in generated samples, this fine-grained reward signal provides denser and more optimization-friendly supervision for reinforcement learning.

Assume that the textual prompt $c$ can be losslessly decomposed into $K$ sub-questions $Q=\{q_1,q_2,\dots,q_K\}$. Let $z(x)\in\{0,1,\dots,K\}$ denote the number of sub-questions satisfied by a visual output $x$. We define the continuous reward and the discrete reward as $r_c(x)=\frac{z(x)}{K}$ and $r_d(x)=\mathbf{1}[z(x)=K]$ respectively. Let $q_k:=\mathbb{P}(z(x)=k)$ denote the probability that a sample satisfies exactly $k$ sub-questions, where $k=0,\dots,K$.
To quantify how well a reward recovers the true quality ordering, we define the ranking accuracy as
\begin{equation}
\mathrm{Acc}_{\mathrm{rank}}(r):=
\mathbb{P}\big((r(x)-r(x'))(z(x)-z(x'))>0 \mid z(x)\neq z(x')\big),
\end{equation}
where $x,x' \sim \text{i.i.d. } \pi_\theta(\cdot \mid c)$. This measures the probability that the reward assigns the correct ordering to a pair of samples with different true quality levels.

For the continuous reward, $r_c(x)=\frac{z(x)}{K}$ is strictly monotonic with respect to $z(x)$. Therefore, whenever $z(x)\neq z(x')$, we have $(r_c(x)-r_c(x'))(z(x)-z(x'))>0$, which gives $\mathrm{Acc}_{\mathrm{rank}}(r_c)=1$.
For the discrete reward, let $A:=\{(r_d(x)-r_d(x'))(z(x)-z(x'))>0\}$ and $B:=\{z(x)\neq z(x')\}$. Then $\mathrm{Acc}_{\mathrm{rank}}(r_d)=\mathbb{P}(A\mid B)=\frac{\mathbb{P}(A\cap B)}{\mathbb{P}(B)}$. Since $x$ and $x'$ are independent, we have $\mathbb{P}(B)=1-\sum_{k=0}^{K}q_k^2$. The event $A\cap B$ occurs only when one sample satisfies all sub-questions while the other does not. Therefore, $\mathbb{P}(A\cap B)=2q_K(1-q_K)$, and thus $\mathrm{Acc}_{\mathrm{rank}}(r_d)=\frac{2q_K(1-q_K)}{1-\sum_{k=0}^{K}q_k^2}\le 1$. A complete proof is provided in Appendix~\ref{appendix-proof}. Moreover, as long as there exist two imperfect samples with different true quality levels, i.e., there exist $0 \le k_1\neq k_2<K$, we have $\mathrm{Acc}_{\mathrm{rank}}(r_d)<1$.Therefore, $\mathrm{Acc}_{\mathrm{rank}}(r_c)=1\ge \mathrm{Acc}_{\mathrm{rank}}(r_d)$.

Therefore, graded rewards preserve the true quality ordering more faithfully by assigning larger advantage weights to samples that satisfy more sub-questions, thereby better aligning optimization with improvements in true sample quality. By contrast, discrete rewards collapse partially correct samples into the same value, failing to distinguish imperfect outputs of differing true visual quality and potentially destabilizing generative policy optimization, especially under complex compositional requirements.

\subsection{Graded Verification Enhances Multimodal Reasoning and Critique Capabilities} \label{sec 3.2}

\paragraph{Drawbacks of Binary Outcome Supervision.} Binary verifiers supervise model training solely through the final True/False outcome, implicitly assuming that a correct verdict reflects a correct reasoning process. This assumption is often reasonable in domains such as mathematical problem solving, where the solution space is large and arriving at the correct answer typically requires coherent multi-step reasoning. However, it becomes much more fragile in binary multimodal critique tasks. Without explicit supervision over the judgment process, a verifier may exploit shallow visual patterns or shortcut cues to produce seemingly correct verdicts without genuinely understanding the underlying multimodal reasoning process \citep{MetaRM,deepseekmathv2}, thereby leading to reward hacking

\begin{table}[t]
\centering
\caption{
  Performance on ViVerBench of the Binary Verifier and  Graded Verifier.
}
\vspace{-2mm}
\begin{threeparttable}
\resizebox{\linewidth}{!}{
\renewcommand{\arraystretch}{1.3}
\begin{tabular}{l|ccc|cc|cc|ccc|cccc|cc|c}
\toprule
\multirow{2}{*}{\textbf{Methods}} 
& \multicolumn{3}{c}{\textbf{Concept Existence}} 
& \multicolumn{2}{c}{\textbf{Object Relation}} 
& \multicolumn{2}{c}{\textbf{World Dynamics}} 
& \multicolumn{3}{c}{\textbf{Image Annotation}} 
& \multicolumn{4}{c}{\textbf{State Value Evaluation}} 
& \multicolumn{2}{c}{\textbf{STEM}} 
& \multirow{2}{*}{\textbf{Overall}} \\
\cmidrule(lr){2-4} 
\cmidrule(lr){5-6} 
\cmidrule(lr){7-8} 
\cmidrule(lr){9-11} 
\cmidrule(lr){12-15} 
\cmidrule(lr){16-17}
& Obj. & Attr. & Abs.P. 
& Spat. & N-Spat. 
& S-Phy & D-Phy 
& BBox & Point & Count 
& Maze & F.Lake & Robot. & GUI 
& Chart & LaTeX 
& \\ 
\midrule

Qwen3-VL-8B 
& 0.710 & 0.690 & 0.562 
& 0.642 & 0.716 
& 0.604 & 0.593 
& 0.831 & 0.741 & 0.489 
& 0.420 & 0.693 & 0.671 & 0.787 
& 0.540 & 0.773 
& 0.654 \\

\ \ + RL(Binary)
& 0.732 & 0.727 & 0.579 
& 0.713 & 0.754 
& 0.600 & 0.596 
& 0.843 & 0.770 & 0.566 
& 0.434 & 0.674 & 0.645 & 0.782 
& 0.526 & 0.795 
& 0.671 \\

\ \ + SFT(Binary)
& 0.689 & 0.695 & 0.620 
& 0.654 & 0.703 
& 0.546 & 0.627 
& 0.802 & 0.735 & 0.504 
& 0.414 & 0.714 & 0.661 & 0.792 
& 0.538 & 0.802 
& 0.656 \\

\ \ +SFT(Binary) + RL(Binary)
& 0.735 & 0.721 & 0.554 
& 0.719 & 0.748 
& 0.521 & 0.636 
& 0.840 & 0.765 & 0.532 
& 0.446 & 0.687 & 0.682 & 0.799 
& 0.553 & 0.814 
& 0.672 \\

\ \ +SFT(Graded)
& 0.724 & 0.709 & 0.535 
& 0.679 & 0.657 
& 0.613 & 0.596 
& 0.817 & 0.752 & 0.543 
& 0.427 & 0.661 & 0.576 & 0.769 
& 0.656 & 0.830 
& 0.659 \\

\rowcolor{gray!10} 
\ \ \textbf{+SFT(Graded)+RL(Graded)}
& 0.758 & 0.735 & 0.541 
& 0.736 & 0.766 
& 0.573 & 0.582 
& 0.859 & 0.781 & 0.559 
& 0.466 & 0.679 & 0.693 & 0.810 
& 0.567 & 0.839 
& 0.684 \\

\bottomrule
\end{tabular}
}
\end{threeparttable}
\vspace{-2mm}
\label{ViVerBench}
\end{table}

\vspace{-2mm}
\paragraph{Experiment Setup.}
We collect textual prompts from ShareGPT-4o-Image  \citep{chen2025sharegpt4oimg} and control prompt length to construct samples with varying levels of complexity. To enhance visual diversity, we randomly sample from multiple generative models, including FLUX.1-dev \citep{labs2025flux}, Qwen-Image \citep{qwen-image-edit2509}, and Z-Image \citep{team2025zimage}, to generate the corresponding images. We then perform supervised fine-tuning (SFT) to learn the sub-question decomposition paradigm distilled from Seed1.8 \citep{seed2026seed1.8}.
 Using the SFT-trained model, we filter out samples whose predictions exactly match the ground truth, and further apply a GPT-5 \citep{gpt5_openai} Bo10 filtering strategy to remove samples with accuracy below 60\% while controlling the proportion of fully correct samples at 30\%.
The remaining high-quality samples are used for \textit{Decoupled Reinforcement Learning}  with DAPO \citep{yu2025dapo} on Qwen3-VL-8B \citep{qwen3}. The model is trained for 80 steps on 16 NVIDIA A800-80G GPUs and evaluated on ViVerBench \citep{omniverifier}.

\vspace{-2mm}
\paragraph{Decoupled Reinforcement Learning.} As shown in Fig.~\ref{fig2_training_pipe}, given a textual prompt $c$ and its corresponding generated image $y$, we convert each pair into $K+1$ reinforcement learning samples, where $K$ denotes the number of decomposed sub-questions. Specifically, one sample is used for decomposition-based continuous scoring policy optimization, while the remaining $K$ samples are used for binary fine-grained judgment policy optimization.
For the decomposition-based scoring sample, the verifier takes the textual prompt $c$ and the generated image $y$ as input, and is trained to predict the global alignment score induced by the decomposition. Given the predicted score $\hat{S}\in[0,1]$ and the ground-truth score $S^*\in[0,1]$, the reward is defined as $\mathcal{R}_{\mathrm{score}} = 1 - |\hat{S} - S^*|$. For each fine-grained judgment sample, the verifier takes the $k$-th sub-question $Q_k$ and the generated image $y$ as input, and is trained to predict the binary answer associated with this sub-question. Given the predicted answer $\hat{A}_k\in\{0,1\}$ and the ground-truth answer $A_k^*\in\{0,1\}$, the reward is defined as $\mathcal{R}_{\mathrm{judgment}} = \mathbbm{1}[\hat{A}_k = A_k^*]$.
During reinforcement learning, we mix these two types of samples and optimize them with their respective rewards, forming a decoupled RL training scheme that jointly optimizes sub-question-level binary judgment and global alignment scoring. Precise 0/1 feedback on individual sub-questions enables the verifier to develop fine-grained verification capabilities that provide a foundation for reliable continuous scoring.

\vspace{-2mm}
\paragraph{Experiment Analysis.}
Table~\ref{ViVerBench} shows that the Decomposed Graded Verifier achieves stronger generalization than the Discrete Binary Verifier on ViVerBench, a comprehensive and challenging benchmark for evaluating multimodal reasoning and critique.
Notably, the Decomposed Graded Verifier requires supervised fine-tuning (SFT) to learn the structured decomposition paradigm. By contrast, the Discrete Binary Verifier only needs to produce a binary True/False verdict with a corresponding textual explanation, and can therefore be directly optimized through reinforcement learning.
For a fair comparison, we evaluate the Discrete Binary Verifier in both RL-only and SFT+RL settings. The results show that the Discrete Binary Verifier consistently underperforms the Decomposed Graded Verifier, regardless of whether it is first adapted with SFT.
This indicates that the key factor is not the training pipeline itself, but the nature of the supervision signal. Binary supervision is highly compressed and sparse, which encourages the model to rely on shallow patterns rather than capturing complex semantic structures. In contrast, the Decomposed Graded Verifier decomposes complex tasks into multiple sub-questions and performs fine-grained judgments, thereby explicitly enforcing a structured reasoning process. This mechanism provides richer and more informative feedback signals, effectively mitigating reward hacking and substantially improving the model's multimodal reasoning and critique capabilities. 
Moreover, our \textbf{Decoupled RL training strategy} jointly optimizes sub-question-level binary judgment and global alignment scoring, using precise 0/1 feedback on individual sub-questions to strengthen fine-grained verification capabilities and thereby provide a foundation for robust generalization to out-of-distribution (OOD) scenarios.

\section{Verifier-Guided Reinforcement Learning for Generative Models} \label{sec 4}
In this section, we investigate how the Decomposed Graded Verifier guides generative model optimization through reinforcement learning. We begin with traditional text-to-image generation task, and then focus on studying how to adapt the verifier to provide more reliable reward signals for rule-governed world modeling scenarios.

\begin{figure}[t]
    \centerline{\includegraphics[width=\textwidth]{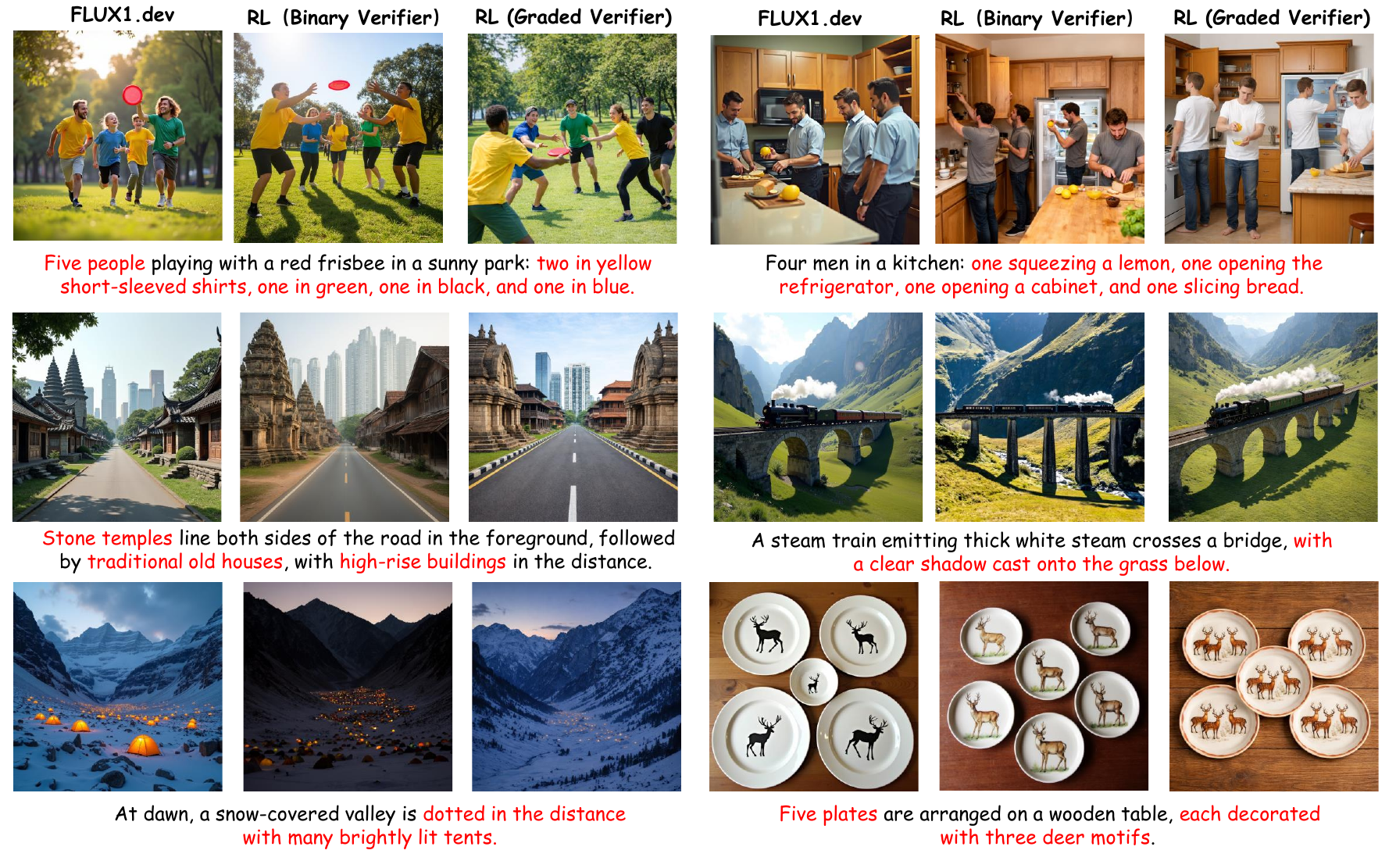}}
        \vspace{-3mm}
    \caption{Qualitative visualization of traditional text-to-image generation.}
    \vspace{-2mm}
    \label{traditional_t2i}
\end{figure}

\subsection{Verifier-Guided RL for Traditional Text-to-Image Generation}
Given a textual prompt $c$ and its generated image $y$, the Decomposed Graded Verifier first decomposes $c$ into a set of fine-grained sub-questions $Q=\{q_1,q_2,\dots,q_K\}$ and then produces a continuous score by aggregating their verification results:
$S = \frac{1}{K}\sum_{k=1}^{K}\mathcal{J}(q_k, y)$,
where $\mathcal{J}(q_k,y)\in\{0,1\}$ denotes a binary judgment function that evaluates whether the generated image $y$ satisfies sub-question $q_k$. This score is used as the reward signal for reinforcement learning to guide policy optimization.

\subsection{Verifier-Guided RL for Rule-Governed World Modeling} \label{sec4.2}

Traditional text-to-image generation primarily targets semantic alignment, where a prompt may admit multiple diverse and visually plausible outputs, leaving no well-defined unique ground-truth image.  By contrast, world simulation and reasoning tasks are rule-governed and highly structured, where the desired final state is often uniquely determined by the initial state and the underlying rules, thereby making a ground-truth image available in many scenarios.  Motivated by this property, we explicitly introduce the ground-truth image as an additional visual input to the verifier, which simplifies the verification process and enables the verifier to produce more accurate and reliable reward signals for rule-governed generative world modeling.

Formally, given an initial visual state $x$ and a rule-based instruction $c$, the generative model $\pi_\theta$ is required to \textbf{understand the underlying rules and perform rule-governed reasoning} to produce a resulting image $y \sim \pi_\theta(y \mid x,c)$ , which can be viewed as a visual prediction of the target final state. Similar to the traditional text-to-image setting, the instruction $c$ can be decomposed into $K$ fine-grained sub-questions $Q=\{q_1,q_2,\dots,q_K\}$, each assessing whether the resulting image $y$ satisfies a specific sub-rule or achieves a corresponding sub-goal.

\vspace{-3mm}
\paragraph{Definition1: Pointwise Verification.}
As shown in Fig.~\ref{fig2_training_pipe}, the pointwise verifier takes $(x,c,y)$ as input. For each sub-question $q_k$, it must first infer the corresponding target state from the initial visual state $x$ and the underlying rules $c$, denoted as $z_k=\mathcal{T}(q_k,x,c)$. The verifier then assesses whether the resulting image $y$ is consistent with this internally inferred target state $z_k$. 
Accordingly, the pointwise verification score is defined as $S_{\mathrm{point}}=\frac{1}{K}\sum_{k=1}^{K}\mathcal{J}(q_k,y,z_k)$, and its deviation from the true score consists of both reasoning error and judgment error: $\epsilon_{\mathrm{point}}=\epsilon_{\mathrm{reasoning}}+\epsilon_{\mathrm{judge}}$.

\vspace{-3mm}
\paragraph{Definition2: Pairwise Verification.}
In contrast, pairwise verification explicitly incorporates the ground-truth image $y^*$ as an additional visual input, with the verifier taking $(x,c,y,y^*)$ as input. For each sub-question $q_k$, the verifier compares the generated image $y$ against the ground-truth image $y^*$ and assigns a corresponding score, hence the name pairwise verification. By avoiding internal target-state inference, the pairwise verification score is defined as $S_{\mathrm{pair}}=\frac{1}{K}\sum_{k=1}^{K}\mathcal{J}(q_k,y,y^*)$, and its deviation from the true score is reduced to the judgment error: $\epsilon_{\mathrm{pair}}=\epsilon_{\mathrm{judge}}$.

By simplifying the reason-then-judge process into direct reference-based comparison and thereby reducing the reasoning errors associated with target-state inference, pairwise verification provides a substantially more reliable reward mechanism than pointwise verification in rule-governed world-modeling scenarios. This property is crucial for post-training generative models, where the effectiveness of reinforcement learning hinges on the accuracy and stability of the reward signal.

\section{Experiments} \label{sec 5}
In this section, we conduct comprehensive experiments to demonstrate the advantages of the Decomposed Graded Verifier as a generative reward model and the superiority of the pairwise verification mechanism in rule-governed world modeling scenarios.

\begin{table*}[t!]
\begin{center}
\caption{Performance on GenEval and CLIPScore under different reward models.}
\vspace{-3mm}
\label{tab:geneval}
\resizebox{1\textwidth}{!}{
\begin{tabular}{l | c c c c c c c | c}
\toprule
\multirow{2}{*}{\textbf{Methods}} 
& \multicolumn{7}{c|}{\textbf{GenEval}} 
& \multirow{2}{*}{\textbf{CLIPScore}} \\
& Single Obj. & Two Obj. & Counting & Colors & Position & Attri. Bind. & Overall & \\
\midrule

Stable-Diffusion-3.5-medium                                    
& 0.99 & 0.78 & 0.69 & 0.79 & 0.25 & 0.45 & 0.66 & 32.8 \\

\ \ +RL(UnifiedReward)              
& 0.99 & 0.84 & 0.75 & 0.86 & 0.36 & 0.55 & 0.73 & 33.7 \\

\ \ +RL(Discrete Binary Verifier)        
& 0.99 & 0.80 & 0.72 & 0.81 & 0.29 & 0.47 & 0.68 & 33.1 \\

\rowcolor{gray!10} 
\ \ \textbf{+RL(Decomposed Graded Verifier)}
& \textbf{1.00} & \textbf{0.90} & \textbf{0.79} & \textbf{0.89} 
& \textbf{0.41} & \textbf{0.63} & \textbf{0.77} & \textbf{34.4} \\

\midrule

FLUX.1-dev                                    
& 0.99 & 0.85 & 0.61 & 0.84 & 0.22 & 0.58 & 0.68 & 33.2 \\

\ \ +RL(UnifiedReward)                    
& 1.00 & 0.90 & 0.65 & 0.89 & 0.33 & 0.64 & 0.74 & 34.0 \\

\ \ +RL(Discrete Binary Verifier)        
& 1.00 & 0.88 & 0.63 & 0.85 & 0.25 & 0.59 & 0.70 & 33.3 \\

\rowcolor{gray!10} 
\ \ \textbf{+RL(Decomposed Graded Verifier)} 
& \textbf{1.00} & \textbf{0.93} & \textbf{0.77} & \textbf{0.90} 
& \textbf{0.43} & \textbf{0.71} & \textbf{0.79} & \textbf{34.5} \\

\bottomrule
\end{tabular}
}
\end{center}
\vspace{-2mm}
\end{table*}

\subsection{Experiment Setup}

\paragraph{Training Datasets.}
For traditional text-to-image tasks, we use textual prompts from ShareGPT-4o-Image~\citep{chen2025sharegpt4oimg}, the same data source used for verifier training.
For rule-governed world modeling scenarios, we construct six game-based tasks to comprehensively evaluate generative models' abilities in \textbf{path planning, logical reasoning, geometric construction, pattern induction, character recognition, and spatial reasoning} .
\begin{itemize}
     \item \textbf{Maze \& Maze-Hard.} Find a valid path from the red start circle to the blue end cross without touching or crossing any black walls. Maze-Hard contains larger-scale maze instances.
     
     \item \textbf{Sudoku \& Sudoku-Hard.} Fill the blanks in a $4\times4$ Sudoku grid so that each row, column, and $2\times2$ subgrid contains $\{1,2,3,4\}$ exactly once. Sudoku-Hard contains more blanks.
     \item \textbf{Auxiliary Line Construction \& Auxiliary Line Construction-Hard.} Draw auxiliary lines in a geometric figure. The hard setting requires multiple compositional constructions.
  
     \item \textbf{Sequence Prediction \& Sequence Prediction-Hard.} Predict the next geometric figure from  shape or color patterns. The hard setting requires simultaneous color-shape changes.
     
    \item \textbf{Word Search \& Word Search-Hard.} Locate the target word in an $N\times N$ grid, where each row contains one character in the correct order. The hard setting corresponds to a larger $N$.
     
     \item \textbf{3D Projection \& 3D Projection-Hard.} Generate the specified 2D orthographic view of a 3D structure, such as the top, front, or side view. The hard setting uses larger 3D structures.
\end{itemize}

\paragraph{Training Details.}
For traditional text-to-image tasks, we adopt Stable-Diffusion-3.5-medium and FLUX.1-dev \citep{labs2025flux} as the backbone generative models. For rule-governed world-modeling scenarios, we use Qwen-Image-Edit-2509 \citep{qwen-image-edit2509} as the backbone. All models are trained for 200 steps using the DiffusionNFT algorithm \citep{zheng2025diffusionnft} on 16 NVIDIA A800-80G GPUs.

\subsection{Text-to-Image Generation}

\paragraph{Graded Rewards Are Better than Binary Rewards.}
Table~\ref{tab:geneval} compares the Decomposed Graded Verifier with the Discrete Binary Verifier as reinforcement learning reward models across different generative backbones, including Stable-Diffusion-3.5-Medium and FLUX.1-dev. The Decomposed Graded Verifier consistently achieves superior performance, demonstrating that fine-grained graded reward signals are more effective for improving the semantic alignment capability of generative models. Fig.~\ref{traditional_t2i} further supports this conclusion through qualitative visualizations.
Fig.~\ref{fig1}  (a)(b) shows the training dynamics under the two reward models. 
We evaluate intermediate checkpoints every 10 steps on a held-out set of 500 textual prompts under the same decomposition-based scoring protocol, \textbf{using Gemini3.1-Pro~\citep{googledeepmind2026gemini31pro} as an evaluator not involved in data processing}.
Compared with the Discrete Binary Verifier, training with the Decomposed Graded Verifier yields steadier and more pronounced performance improvements, suggesting that denser and more discriminative rewards support more stable and effective optimization.

\begin{figure}[t]
    \centerline{\includegraphics[width=\textwidth]{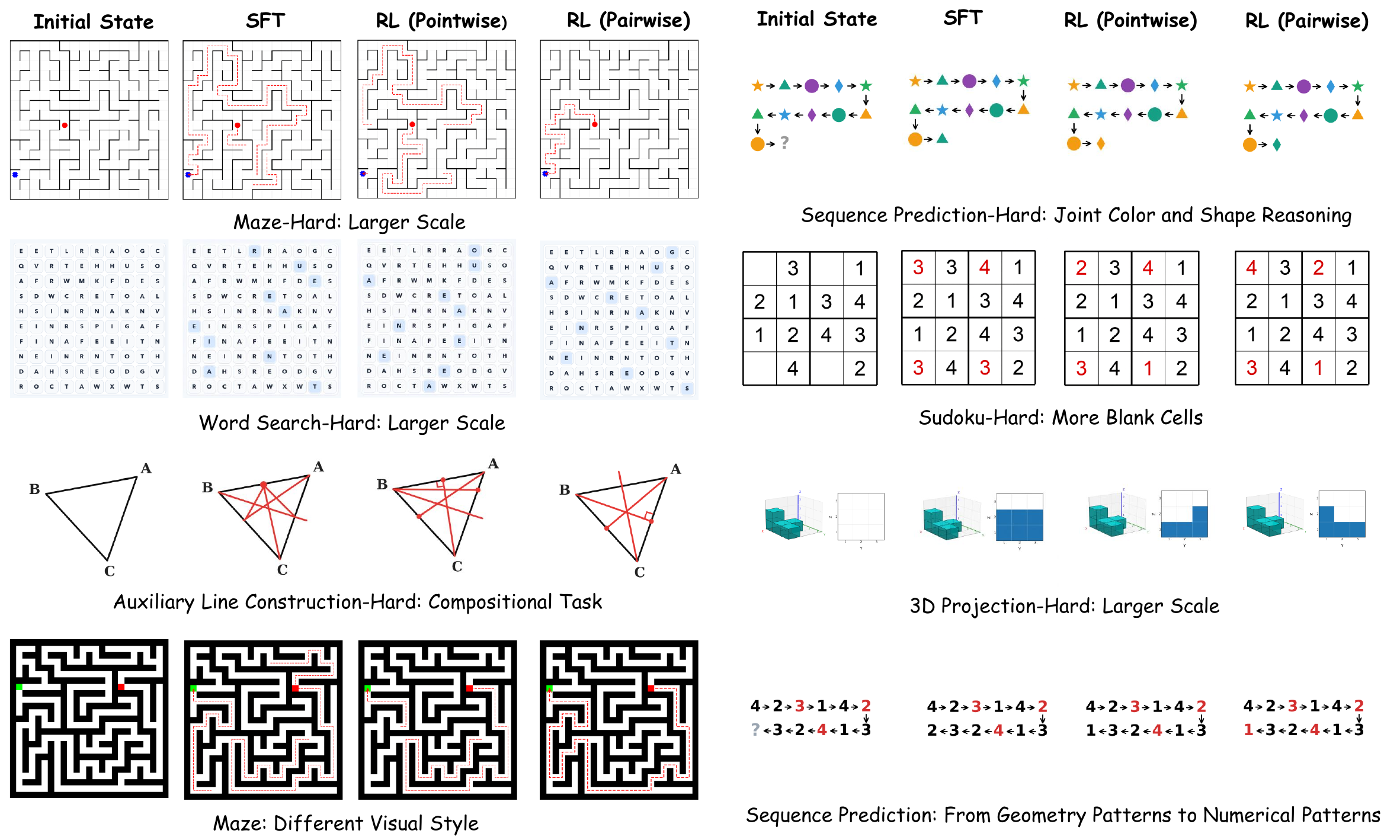}}
    \vspace{-2mm}
    \caption{Qualitative visualization of rule-governed world-modeling tasks.}
    \vspace{-2mm}
    \label{fig_reasoning}
\end{figure}

\paragraph{Comparison with Other Continuous Reward Models.}
We compare the Decomposed Graded Verifier with UnifiedReward \citep{unifiedreward}, which also focuses on semantic alignment and produces continuous reward scores. As shown in Table~\ref{tab:geneval}, our method achieves superior performance on both GenEval \citep{ghosh2023geneval} and CLIPScore \citep{hessel2021clipscore}, supporting the ability of the Decomposed Graded Verifier trained with our Decoupled RL strategy to provide more accurate and fine-grained reward signals and highlighting its advantages as a generative reward model.

\subsection{Rule-Governed Generative World Modeling}

\paragraph{Advantages of Pairwise Verification over Pointwise Verification.}

As shown in Tables~\ref{tab:game_editing} and~\ref{tab:rise_uniredit}, pairwise verification consistently outperforms pointwise verification on RISEBench~\citep{zhao2025risebench}, UniREditBench~\citep{han2025unireditbench}, and our constructed game-based benchmark. These results indicate that incorporating the ground-truth image as an explicit reference significantly enhances the accuracy and reliability of reward signals.
The training dynamics in Fig.~\ref{fig1}(e) summarize average performance across our six constructed game-based tasks and further highlight the optimization advantages of pairwise verification over pointwise verification. We evaluate intermediate checkpoints every 10 training steps on the held-out validation sets. \textbf{To improve evaluation reliability, we use Gemini3.1-Pro~\citep{googledeepmind2026gemini31pro} as an evaluator not involved in data processing and explicitly provide the ground-truth image following the pairwise verification protocol.}

The pairwise-trained model exhibits steadier and more pronounced performance improvements, whereas the pointwise-trained model shows clear performance fluctuations. Pointwise rewards require the verifier to infer an implicit target state, which can introduce substantial noise into the feedback in complex reasoning scenarios. In contrast, pairwise verification directly compares the generated result with the ground-truth image, reducing reliance on target-state inference and enabling reward signals to better reflect the true visual quality. This provides more reliable supervision for policy optimization and highlights the value of  pairwise verification for developing generative models into rule-governed world-modeling agents.

\begin{table}[t]
\centering
\caption{
Performance comparison on game-based image-to-image tasks for world modeling.
}
\vspace{-2mm}
\begin{threeparttable}
\renewcommand{\arraystretch}{1.15}
\resizebox{0.99\linewidth}{!}{
\begin{tabular}{l|cc|cc|cc|cc|cc|cc}
\toprule
\multirow{2}{*}{\textbf{Methods}}
& \multicolumn{2}{c|}{\textbf{Maze}}
& \multicolumn{2}{c|}{\textbf{Sudoku}}
& \multicolumn{2}{c|}{\makecell{\textbf{Aux. Line}\\\textbf{Construction}}}
& \multicolumn{2}{c|}{\makecell{\textbf{Sequence}\\\textbf{Prediction}}}
& \multicolumn{2}{c|}{\makecell{\textbf{Word}\\\textbf{Search}}}
& \multicolumn{2}{c}{\makecell{\textbf{3D}\\\textbf{Projection}}} \\
\cmidrule(lr){2-3}
\cmidrule(lr){4-5}
\cmidrule(lr){6-7}
\cmidrule(lr){8-9}
\cmidrule(lr){10-11}
\cmidrule(lr){12-13}
& Normal & Hard
& Normal & Hard
& Normal & Hard
& Normal & Hard
& Normal & Hard
& Normal & Hard \\
\midrule

Qwen-Image-Edit-2509
& 17.31 & 8.73
& 22.21 & 5.62
& 16.79 & 7.32
& 9.18 & 4.33
& 6.28 & 2.46
& 3.86 & 1.71 \\

\ \ + SFT
& 56.77 & 18.96
& 63.62 & 29.99
& 57.62 & 26.84
& 46.53 & 27.96
& 37.36 & 17.29
& 42.94 & 23.37 \\

\ \ +SFT+RL(Pointwise)
& 67.81 & 34.78
& 77.74 & 50.71
& 66.98 & 37.49
& 61.27 & 34.18
& 50.92 & 26.94
& 52.19 & 33.16 \\

\rowcolor{gray!10} 
\ \ +SFT+RL(Pairwise)
& \textbf{85.31} & \textbf{51.92}
& \textbf{89.97} & \textbf{62.99}
& \textbf{76.19} & \textbf{46.12}
& \textbf{73.48} & \textbf{63.64}
& \textbf{62.18} & \textbf{34.27}
& \textbf{69.28} & \textbf{47.51} \\

\bottomrule
\end{tabular}
}
\end{threeparttable}
\vspace{-1mm}
\label{tab:game_editing}
\end{table}

\begin{table*}[t!]
\centering
\caption{Performance comparison on RISEBench and UniREditBench.}
\vspace{-2mm}
\label{tab:rise_uniredit}
\begin{threeparttable}
\renewcommand{\arraystretch}{1.35}
\resizebox{\textwidth}{!}{
\begin{tabular}{l|ccccc||ccccc|ccccc|c}
\toprule
\multirow{2}{*}{\textbf{Methods}}
& \multicolumn{5}{c||}{\textbf{RISEBench}}
& \multicolumn{5}{c|}{\textbf{UniREdit-Real}}
& \multicolumn{5}{c|}{\textbf{UniREdit-Game}}
& \multirow{2}{*}{\textbf{Overall}} \\
\cmidrule(lr){2-6} \cmidrule(lr){7-11} \cmidrule(lr){12-16}
& \textbf{T} & \textbf{C} & \textbf{S} & \textbf{L} & \textbf{Overall}
& \textbf{AM} & \textbf{ST} & \textbf{PI} & \textbf{PR} & \textbf{Avg.}
& \textbf{SI} & \textbf{SR} & \textbf{LHP} & \textbf{LPS} & \textbf{Avg.}
& \\
\midrule

Qwen-Image-Edit-2509
& 2.4 & 7.8 & 14.0 & 1.2 & 6.7
& 70.04 & 68.49 & 67.26 & 64.11 & 67.48
& 52.31 & 32.96 & 42.46 & 37.89 & 41.41
& 53.16 \\

\ \ + SFT
& 3.5 & 7.8 & 16.0 & 5.9 & 8.6
& 70.56 & 68.68 & 67.55 & 64.98 & 67.94
& 66.92 & 47.48 & 56.06 & 43.98 & 53.61
& 59.14 \\

\ \ + SFT + RL(Pointwise)
& 5.9 & 8.9 & 18.0 & 12.9 & 11.7
& 71.24 & 68.73 & 68.38 & 65.77 & 68.53
& 68.73 & 49.16 & 60.32 & 56.41 & 58.66
& 62.37 \\

\rowcolor{gray!10}
\ \ + SFT + RL(Pairwise)
& \textbf{7.1} & \textbf{10.0} & \textbf{19.0} & \textbf{23.5} & \textbf{15.0}
& \textbf{71.87} & \textbf{69.02} & \textbf{68.72} & \textbf{66.08} & \textbf{68.92}
& \textbf{70.23} & \textbf{51.33} & \textbf{68.93} & \textbf{63.53} & \textbf{63.51}
& \textbf{65.15} \\

\bottomrule
\end{tabular}
}

\vspace{-1mm}
\end{threeparttable}
\vspace{-2mm}
\end{table*}

\vspace{-1mm}
\paragraph{From Easy-to-Hard and Domain Generalization.}
To more comprehensively evaluate the generalization potential of the model trained with pairwise verification, we construct a more challenging Hard Dataset that is excluded from training. This dataset contains larger-scale instances and compositional tasks designed to test reasoning in more complex structured environments.
As shown in Table~\ref{tab:game_editing} and Fig.~\ref{fig_reasoning}, the model exhibits strong performance across out-of-distribution scenarios, including shifts in the visual domain. These results suggest that the model acquires transferable capabilities for structured reasoning and rule-governed world modeling beyond task-specific training patterns, highlighting the advantages of pairwise verification in supporting robust generalization.

\vspace{-1mm}
\paragraph{SFT Provides Domain-Specific Execution Priors.}
Tables~\ref{tab:game_editing} and~\ref{tab:rise_uniredit} show that SFT yields substantial performance gains, which can be primarily attributed to task-domain adaptation.
Since Qwen-Image-Edit-2509 has not been specifically trained on game-based image-to-image generation tasks, it often fails to interpret the required transformation. In practice, the model tends to either the model tends to either leave the input image unchanged or distort its original structure, indicating that it lacks the task-specific action priors required for this domain.
After supervised fine-tuning on our constructed game-based dataset, the generative model learns the visual style of these tasks and the corresponding rule-based transformation patterns. Although the model still often fails to produce fully correct solutions, it begins to perform meaningful operations toward the desired final state rather than generating irrelevant or destructive visual outputs, leading to clear performance improvements. These results indicate that \textbf{SFT primarily equips the model with domain-specific execution ability}, while subsequent reinforcement learning further endows the generative model with reasoning intelligence.

\paragraph{Generative Intelligence Beyond the Text Encoder.}

In both multimodal understanding and generation models, the text encoder is often regarded as the primary source of reasoning intelligence. For generative models, this view implies that the text encoder interprets instructions and provides semantic guidance, while the DiT primarily executes the visual generation process. However, in our training pipeline, the text encoder remains frozen throughout both SFT and RL, and only the DiT parameters are updated. The reasoning capabilities observed after training suggest that reasoning intelligence is not encoded exclusively in the text encoder, as the DiT can also learn, internalize, and express capabilities for structured reasoning and rule-governed world modeling.

\section{Conclusion}
This work introduces the Decomposed Graded Verifier as an effective reward model for improving visual generation in both traditional text-to-image tasks and structured world-modeling scenarios. Our contributions are twofold. First, we propose a Decoupled RL training strategy that jointly optimizes sub-question-level binary judgment and global alignment scoring, using precise local feedback to develop fine-grained verification capabilities that support accurate continuous scoring. Empirical results further demonstrate improved out-of-domain multimodal critic capabilities. Second, focusing on rule-governed world modeling, we introduce a pairwise verification paradigm that explicitly incorporates ground-truth images as visual references. This simplifies verification by shifting from a reason-then-judge process to direct reference-based comparison, thereby producing more accurate and reliable reward signals. Extensive experiments validate the effectiveness of both the Decomposed Graded Verifier trained with our Decoupled RL strategy and the pairwise verification paradigm.

\newpage

\bibliography{example_paper}
\bibliographystyle{colm2025_conference}

\newpage

\appendix

\section{Proof of Ranking Preservation}
\label{appendix-proof}

In this section we prove that Decomposed Graded Verification preserves the true quality ordering of generated samples more faithfully than Discrete Binary Verification. The argument rests on a single observation: under GRPO, the relative update strength among samples is dictated by their group-normalized advantages, whose ordering is fully inherited from the underlying reward values. It therefore suffices to compare how the two reward designs rank pairs of samples with distinct latent quality.

\paragraph{Setup.}
Let $c$ be a prompt and $x \sim \pi_\theta(\cdot \mid c)$ a sample drawn from the current policy. Suppose $c$ admits a lossless decomposition into $K$ sub-questions, and let
\begin{equation}
    z(x) \in \{0,1,\dots,K\}
\end{equation}
denote the number of sub-questions satisfied by $x$. We treat $z(x)$ as the latent fine-grained quality level of $x$ and write
\begin{equation}
    q_k := \mathbb{P}(z(x)=k), \qquad k=0,\dots,K,
\end{equation}
where the probability is taken with respect to $x\sim \pi_\theta(\cdot\mid c)$.

We compare two reward designs:
\begin{equation}
    r_c(x)=\frac{z(x)}{K},
    \qquad
    r_d(x)=\mathbf{1}[z(x)=K],
\end{equation}
namely the decomposed graded reward $r_c$ and the discrete binary reward $r_d$.

For two independent samples $x,x'\sim \pi_\theta(\cdot\mid c)$, the pairwise ranking accuracy of a reward function $r$ is defined as
\begin{equation}
\mathrm{Acc}_{\mathrm{rank}}(r)
:=
\mathbb{P}
\left(
(r(x)-r(x'))(z(x)-z(x'))>0
\mid
z(x)\neq z(x')
\right).
\label{eq:rank_acc}
\end{equation}
In words, conditional on the two samples carrying distinct true quality levels, $\mathrm{Acc}_{\mathrm{rank}}(r)$ is the probability that $r$ assigns them a strictly correct relative ordering.

\paragraph{Remark on non-degeneracy.}
Throughout the proof we assume
\begin{equation}
    \mathbb{P}(z(x)\neq z(x'))>0,
\end{equation}
i.e., the distribution of $z(x)$ is not a point mass on a single quality level. When this fails, no pair of samples carries distinct quality levels and Eq.~\eqref{eq:rank_acc} is undefined; the comparison is vacuous in this case and we exclude it without loss of generality.

\begin{lemma}[Advantage ordering is inherited from reward ordering]
\label{lem:advantage_order}
Consider a GRPO rollout group $\{x_i\}_{i=1}^n$ with rewards $r_i=r(x_i)$, and let
\begin{equation}
    A_i(r)=\frac{r_i-\bar r}{\sigma_r},
    \qquad
    \bar r=\frac{1}{n}\sum_{j=1}^n r_j,
    \qquad
    \sigma_r^2=\frac{1}{n}\sum_{j=1}^n(r_j-\bar r)^2
\end{equation}
denote the group-normalized advantage. If $\sigma_r>0$, then for any pair of samples $x_i,x_j$,
\begin{equation}
    \operatorname{sign}\big(A_i(r)-A_j(r)\big)
    =
    \operatorname{sign}\big(r_i-r_j\big).
\end{equation}
Consequently, the relative ordering of GRPO update weights coincides exactly with the relative ordering of rewards.
\end{lemma}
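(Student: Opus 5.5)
The plan is to argue directly from the definition of the group-normalized advantage, since the lemma is a statement about a positive affine map. With the group $\{x_i\}_{i=1}^n$ fixed, both $\bar r$ and $\sigma_r$ are constants: they depend on the whole group but not on the index $i$. So $A_i(r)=\alpha r_i+\beta$ for every $i$, with
\begin{equation*}
\alpha=\frac{1}{\sigma_r},\qquad \beta=-\frac{\bar r}{\sigma_r}.
\end{equation*}
The hypothesis $\sigma_r>0$ is exactly what makes $\alpha$ well defined and strictly positive.

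For any pair of indices $i,j$, subtracting gives
\begin{equation*}
A_i(r)-A_j(r)=\frac{r_i-r_j}{\sigma_r}.
\end{equation*}
Multiplying a real number by a strictly positive constant preserves its sign, so $\operatorname{sign}(A_i(r)-A_j(r))=\operatorname{sign}(r_i-r_j)$. This covers all three cases $r_i>r_j$, $r_i=r_j$ and $r_i<r_j$ at once, including ties, where both differences vanish.

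The "consequently" clause follows because the induced orders coincide. The sign equality holds for every pair, so the ordering of the $A_i$, which serve as the GRPO per-sample update weights multiplying the log-likelihood gradients, is the same as the ordering of the rewards, ties included.

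The only point needing care is the degenerate case $\sigma_r=0$, where all rewards in the group are equal. There the advantage is undefined, or set to zero by convention, and the lemma's hypothesis deliberately excludes it. I would remark that in this case all rewards are tied, so no ordering information is lost beyond what the reward already fails to provide. This is precisely where the binary reward $r_d$ degenerates more often than the graded reward $r_c$. No real obstacle is expected; the proof is a one-line computation.
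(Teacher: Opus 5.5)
Your proof is correct and matches the paper's: both subtract the two advantages so that the common mean $\bar r$ cancels, leaving $(r_i-r_j)/\sigma_r$, and then use $\sigma_r>0$ to conclude that the signs agree. Your extra remarks (viewing the advantage as a positive affine map of the reward, and noting the excluded degenerate case $\sigma_r=0$) are accurate but not needed for the argument.
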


\begin{proof}
Subtracting the two normalized advantages cancels the common mean,
\begin{equation}
    A_i(r)-A_j(r)
    =
    \frac{(r_i-\bar r)-(r_j-\bar r)}{\sigma_r}
    =
    \frac{r_i-r_j}{\sigma_r},
\end{equation}
and dividing by the positive scalar $\sigma_r$ preserves the sign. Therefore
\begin{equation}
    \operatorname{sign}\big(A_i(r)-A_j(r)\big)
    =
    \operatorname{sign}\big(r_i-r_j\big),
\end{equation}
as claimed.
\end{proof}

\begin{lemma}[The graded reward perfectly preserves the quality ordering]
\label{lem:continuous_order}
For the decomposed graded reward
\begin{equation}
    r_c(x)=\frac{z(x)}{K},
\end{equation}
the ranking accuracy satisfies
\begin{equation}
    \mathrm{Acc}_{\mathrm{rank}}(r_c)=1.
\end{equation}
\end{lemma}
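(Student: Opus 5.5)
The plan is to show that the event inside the conditional probability in Eq.~\eqref{eq:rank_acc} holds with certainty on the conditioning event. The accuracy is then a ratio of two equal positive probabilities. The only structural input is that $r_c$ is a strictly increasing affine function of $z$.

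First, for any pair of realizations $x,x'$ we write
\begin{equation*}
(r_c(x)-r_c(x'))(z(x)-z(x')) = \frac{1}{K}\,(z(x)-z(x'))^2 .
\end{equation*}
Since $K\ge 1$, this quantity is nonnegative everywhere. It is strictly positive exactly when $z(x)\neq z(x')$.

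Second, we let $A:=\{(r_c(x)-r_c(x'))(z(x)-z(x'))>0\}$ and $B:=\{z(x)\neq z(x')\}$. The pointwise identity above gives the event equality $A=B$, hence $A\cap B=B$. By the non-degeneracy remark, $\mathbb{P}(B)>0$, so the conditional probability is well defined. We then conclude
\begin{equation*}
\mathrm{Acc}_{\mathrm{rank}}(r_c)=\frac{\mathbb{P}(A\cap B)}{\mathbb{P}(B)}=\frac{\mathbb{P}(B)}{\mathbb{P}(B)}=1 .
\end{equation*}

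There is no substantive obstacle here. The only point needing care is ensuring the conditional probability is defined, which the non-degeneracy assumption $\mathbb{P}(z(x)\neq z(x'))>0$ handles. It is also worth remarking that the argument never uses independence of $x,x'$ or any property of the distribution $(q_k)$, only strict monotonicity of $r_c$ in $z$. Combining this with Lemma~\ref{lem:advantage_order} then yields the interpretation that GRPO advantages under $r_c$ order any two samples of distinct quality correctly whenever $\sigma_{r_c}>0$.
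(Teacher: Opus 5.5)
Your proposal is correct and takes essentially the same approach as the paper. Both write the product as $(z(x)-z(x'))^2/K$, note that it is strictly positive whenever $z(x)\neq z(x')$, and conclude that the conditional probability is one; your event identity $A=B$ and your appeal to the non-degeneracy remark for well-definedness only make the same argument's bookkeeping more explicit.
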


\begin{proof}
Since $r_c$ is a strictly increasing affine function of $z$, for any pair $x,x'$ with $z(x)\neq z(x')$ we have
\begin{equation}
    (r_c(x)-r_c(x'))(z(x)-z(x'))
    =
    \frac{(z(x)-z(x'))^2}{K}>0,
\end{equation}
where the strict inequality uses $K>0$ and $z(x)\neq z(x')$. Hence the event $(r_c(x)-r_c(x'))(z(x)-z(x'))>0$ holds with probability one conditional on $z(x)\neq z(x')$, and $\mathrm{Acc}_{\mathrm{rank}}(r_c)=1$.
\end{proof}

\begin{lemma}[The binary reward only ranks perfect-vs-imperfect pairs]
\label{lem:discrete_event}
For the discrete binary reward
\begin{equation}
    r_d(x)=\mathbf{1}[z(x)=K],
\end{equation}
the event
\begin{equation}
    (r_d(x)-r_d(x'))(z(x)-z(x'))>0
\end{equation}
holds if and only if exactly one of the two samples attains the perfect quality level $K$.
\end{lemma}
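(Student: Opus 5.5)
The plan is a direct case analysis on the values of $r_d(x)$ and $r_d(x')$, which lie in $\{0,1\}$. The product $(r_d(x)-r_d(x'))(z(x)-z(x'))$ can be strictly positive only if its first factor is nonzero. So I first show that $r_d(x)\neq r_d(x')$ holds exactly when one of the two samples has $z=K$ and the other does not.

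\emph{Case 1: both samples are perfect.} If $z(x)=z(x')=K$, then $r_d(x)=r_d(x')=1$. The product is zero, so the event fails.

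\emph{Case 2: both samples are imperfect.} If $z(x)<K$ and $z(x')<K$, then $r_d(x)=r_d(x')=0$. The product is again zero, even when $z(x)\neq z(x')$, so the event fails. This is the case that carries the ``only if'' direction: collapsing all imperfect levels to the same value destroys every ordering among them.

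\emph{Case 3: exactly one sample is perfect.} By symmetry, take $z(x)=K>z(x')$. Then $r_d(x)-r_d(x')=1>0$ and $z(x)-z(x')>0$, so the product is positive and the event holds. This gives the ``if'' direction.

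Combining the three cases, the event holds if and only if exactly one sample attains level $K$. As a consequence, this event is automatically contained in $\{z(x)\neq z(x')\}$. That justifies the identity $\mathbb{P}(A\cap B)=\mathbb{P}(A)=2q_K(1-q_K)$ used in the main text, by independence of $x$ and $x'$.

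There is no real obstacle here. The only point needing care is to note explicitly, in Case 3, that the sign of $z(x)-z(x')$ matches the sign of $r_d(x)-r_d(x')$. This holds because the perfect sample necessarily has the strictly larger $z$, so the binary reward never ranks a pair in the wrong direction.
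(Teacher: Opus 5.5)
Your proof is correct and uses the same three-way case analysis as the paper: exactly one sample perfect, both imperfect, both perfect. The only difference is that the paper conditions on $z(x)\neq z(x')$ and simply excludes the both-perfect case, whereas you show directly that the product vanishes there. Your version therefore proves the unconditional biconditional as literally stated, and it makes the containment $A\subseteq B$ (hence $\mathbb{P}(A\cap B)=\mathbb{P}(A)$) explicit.
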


\begin{proof}
Conditioning on $z(x)\neq z(x')$, there are exactly three disjoint cases:

\emph{(i) Exactly one sample is perfect.} Without loss of generality, suppose $z(x)=K$ and $z(x')<K$. Then $r_d(x)-r_d(x')=1$ and $z(x)-z(x')=K-z(x')>0$, so the product is strictly positive. The case $z(x')=K, z(x)<K$ is symmetric.

\emph{(ii) Both samples are imperfect, i.e.\ $z(x)<K$ and $z(x')<K$.} Then $r_d(x)=r_d(x')=0$, so $r_d(x)-r_d(x')=0$ and the product vanishes, even though $z(x)\neq z(x')$.

\emph{(iii) Both samples are perfect.} This case violates $z(x)\neq z(x')$ and is excluded.

The product is strictly positive only in case~(i), which proves the claim.
\end{proof}

\begin{theorem}[Ranking advantage of decomposed graded verification]
\label{thm:ranking_advantage}
Assume the quality distribution $\{q_k\}_{k=0}^K$ is non-degenerate, that is,
\begin{equation}
    1-\sum_{k=0}^K q_k^2>0.
\end{equation}
Then
\begin{equation}
    \mathrm{Acc}_{\mathrm{rank}}(r_c)=1,
    \qquad
    \mathrm{Acc}_{\mathrm{rank}}(r_d)
    =
    \frac{2q_K(1-q_K)}
    {1-\sum_{k=0}^K q_k^2}.
\end{equation}
In particular,
\begin{equation}
    \mathrm{Acc}_{\mathrm{rank}}(r_c)
    \ge
    \mathrm{Acc}_{\mathrm{rank}}(r_d),
\end{equation}
with strict inequality whenever there exist two distinct imperfect quality levels with non-zero probability, i.e., $q_aq_b>0$ for some $0\le a<b<K$.
\end{theorem}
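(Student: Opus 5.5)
The plan is to combine the three lemmas just proved with an explicit computation of the two probabilities in the conditional definition \eqref{eq:rank_acc}. The first identity, $\mathrm{Acc}_{\mathrm{rank}}(r_c)=1$, is exactly Lemma~\ref{lem:continuous_order}. The non-degeneracy hypothesis guarantees that the conditioning event has positive probability, so the conditional probability for $r_d$ is well defined. Lemma~\ref{lem:advantage_order} is not needed for the computation itself; it only justifies reading the result as a statement about GRPO update weights.

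For $r_d$, write $A$ for the event that the product $(r_d(x)-r_d(x'))(z(x)-z(x'))$ is positive and $B=\{z(x)\neq z(x')\}$. First compute $\mathbb{P}(B)$ from independence of $x$ and $x'$:
\[
\mathbb{P}(B)=1-\sum_{k}\mathbb{P}(z(x)=k,\,z(x')=k)=1-\sum_{k=0}^K q_k^2 .
\]
Next, Lemma~\ref{lem:discrete_event} shows that, given $B$, the event $A$ coincides with ``exactly one sample has $z=K$''. That event already forces $z(x)\neq z(x')$, so $A\cap B$ is the disjoint union of $\{z(x)=K,\ z(x')<K\}$ and its mirror image. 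Independence gives
\[
\mathbb{P}(A\cap B)=2q_K(1-q_K).
\]
Dividing by $\mathbb{P}(B)$ yields the stated formula.

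For the comparison, split $B$ into the disjoint pieces $A\cap B$ and $C:=\{z(x)<K,\ z(x')<K,\ z(x)\neq z(x')\}$. Then
\[
\mathbb{P}(B)=2q_K(1-q_K)+\mathbb{P}(C),
\]
and this identity carries both remaining claims.

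\begin{itemize}
\item \emph{Weak inequality.} Since $\mathbb{P}(C)\ge 0$, we get $\mathrm{Acc}_{\mathrm{rank}}(r_d)\le 1=\mathrm{Acc}_{\mathrm{rank}}(r_c)$. One can also check the identity algebraically, using
\[
1-\sum_k q_k^2=\Bigl(\sum_k q_k\Bigr)^2-\sum_k q_k^2=\sum_{a\neq b}q_aq_b .
\]
\item \emph{Strict inequality.} By independence,
\[
\mathbb{P}(C)=\sum_{a\neq b,\ a,b<K}q_aq_b\ \ge\ 2q_aq_b>0
\]
for the assumed pair $0\le a<b<K$. Since $\mathbb{P}(B)>0$, the ratio is then strictly less than $1$. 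This holds even when $q_K=0$, where the accuracy is $0$.
\end{itemize}

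The only point needing care is the event bookkeeping: Lemma~\ref{lem:discrete_event} is a statement given $B$, so one must note explicitly that $A\cap B$ equals the ``exactly one perfect'' event and that $C$ is exactly the rest of $B$, so neither part is double counted. Everything else is routine.
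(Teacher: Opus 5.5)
Your proposal is correct and follows the paper's proof essentially step for step. You compute $\mathbb{P}(B)$ and $\mathbb{P}(A\cap B)=2q_K(1-q_K)$ the same way, from independence and Lemma~\ref{lem:discrete_event}, and your split $B=(A\cap B)\sqcup C$ with $\mathbb{P}(C)=2\sum_{0\le a<b<K}q_aq_b$ is the event-level form of the paper's algebraic decomposition of unordered pairs $(a,b)$ by whether $b=K$, so the weak and strict inequalities follow from this leftover term being nonnegative, respectively positive, exactly as in the paper.
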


\begin{proof}
The identity $\mathrm{Acc}_{\mathrm{rank}}(r_c)=1$ is Lemma~\ref{lem:continuous_order}. It remains to evaluate $\mathrm{Acc}_{\mathrm{rank}}(r_d)$ and to compare the two accuracies.

\emph{Step 1: Closed form for $\mathrm{Acc}_{\mathrm{rank}}(r_d)$.} Define the events
\begin{equation}
    A := \left\{
    (r_d(x)-r_d(x'))(z(x)-z(x'))>0
    \right\},
    \qquad
    B := \left\{ z(x)\neq z(x') \right\},
\end{equation}
so that $\mathrm{Acc}_{\mathrm{rank}}(r_d) = \mathbb{P}(A\mid B) = \mathbb{P}(A\cap B)/\mathbb{P}(B)$.

By independence of $x$ and $x'$,
\begin{equation}
    \mathbb{P}(B)
    =
    1-\mathbb{P}(z(x)=z(x'))
    =
    1-\sum_{k=0}^K q_k^2,
\label{eq:denominator}
\end{equation}
which is strictly positive by the non-degeneracy assumption. By Lemma~\ref{lem:discrete_event}, the event $A\cap B$ is exactly the event that exactly one of the two samples attains quality level $K$. Hence
\begin{align}
    \mathbb{P}(A\cap B)
    &=
    \mathbb{P}(z(x)=K,\,z(x')<K)
    +
    \mathbb{P}(z(x)<K,\,z(x')=K)
    \\
    &=
    2q_K(1-q_K).
\label{eq:numerator}
\end{align}
Dividing Eq.~\eqref{eq:numerator} by Eq.~\eqref{eq:denominator} gives the claimed formula for $\mathrm{Acc}_{\mathrm{rank}}(r_d)$.

\emph{Step 2: $\mathrm{Acc}_{\mathrm{rank}}(r_d)\le 1$.} Expanding $(\sum_k q_k)^2 = 1$ yields the standard identity
\begin{equation}
    1-\sum_{k=0}^K q_k^2
    =
    2\sum_{0\le a<b\le K}q_aq_b,
\end{equation}
and the same expansion gives
\begin{equation}
    2q_K(1-q_K) = 2\sum_{a=0}^{K-1} q_a q_K.
\end{equation}
Splitting the unordered pairs $(a,b)$ according to whether $b=K$, we obtain the decomposition
\begin{equation}
    \underbrace{1-\sum_{k=0}^K q_k^2}_{\text{denominator}}
    =
    \underbrace{2\sum_{a=0}^{K-1}q_aq_K}_{\text{numerator}}
    \;+\;
    2\sum_{0\le a<b<K}q_aq_b.
\label{eq:decomp}
\end{equation}
The second term on the right is non-negative, so the denominator is at least as large as the numerator and $\mathrm{Acc}_{\mathrm{rank}}(r_d)\le 1$. Combined with Lemma~\ref{lem:continuous_order}, this gives
\begin{equation}
    \mathrm{Acc}_{\mathrm{rank}}(r_c)
    =1\ge
    \mathrm{Acc}_{\mathrm{rank}}(r_d).
\end{equation}

\emph{Step 3: Strictness.} From Eq.~\eqref{eq:decomp}, the inequality is strict if and only if
\begin{equation}
    \sum_{0\le a<b<K}q_aq_b>0,
\end{equation}
which holds precisely when $q_aq_b>0$ for some $0\le a<b<K$, i.e., when at least two distinct imperfect quality levels are visited with positive probability. In this case,
\begin{equation}
    \mathrm{Acc}_{\mathrm{rank}}(r_d)<1
    =
    \mathrm{Acc}_{\mathrm{rank}}(r_c),
\end{equation}
which completes the proof.
\end{proof}

\begin{corollary}[Collapse of imperfect samples under binary verification]
\label{cor:collapse}
The ranking error of the discrete binary reward is exactly
\begin{equation}
    1-\mathrm{Acc}_{\mathrm{rank}}(r_d)
    =
    \frac{
    2\sum_{0\le a<b<K}q_aq_b
    }
    {
    1-\sum_{k=0}^K q_k^2
    }.
\end{equation}
Equivalently, every ranking error incurred by the binary reward comes from a pair of imperfect samples with different true quality levels.
\end{corollary}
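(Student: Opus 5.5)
The plan is to obtain the corollary directly from the closed form in Theorem~\ref{thm:ranking_advantage}, together with the decomposition Eq.~\eqref{eq:decomp} established in its proof. Under the standing non-degeneracy assumption $1-\sum_{k=0}^K q_k^2>0$, the first step is to write
\begin{equation*}
1-\mathrm{Acc}_{\mathrm{rank}}(r_d)=\frac{\bigl(1-\sum_{k=0}^K q_k^2\bigr)-2q_K(1-q_K)}{1-\sum_{k=0}^K q_k^2}.
\end{equation*}
Substituting Eq.~\eqref{eq:decomp}, the denominator splits as the numerator term $2\sum_{a=0}^{K-1}q_aq_K$ plus $2\sum_{0\le a<b<K}q_aq_b$. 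The difference in the numerator above is therefore exactly $2\sum_{0\le a<b<K}q_aq_b$, which gives the displayed formula.

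For the ``equivalently'' clause, I would argue at the level of events rather than algebra. The ranking error is $\mathbb{P}(A^c\mid B)=\mathbb{P}(A^c\cap B)/\mathbb{P}(B)$. Lemma~\ref{lem:discrete_event}, through its case analysis, shows that on $B$ the complement $A^c$ is precisely case~(ii): both samples are imperfect, with $z(x)\neq z(x')$, and both are assigned reward $0$.

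By independence, the probability of that event is
\begin{equation*}
\sum_{a\neq b,\;a,b<K}q_aq_b=2\sum_{0\le a<b<K}q_aq_b,
\end{equation*}
which matches the numerator and confirms that every error arises from a pair of distinct imperfect levels.

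I expect no real obstacle here. The only care points are to keep the non-degeneracy assumption in force so that the denominator is positive, and to count ordered pairs correctly, since that is where the factor $2$ comes from.
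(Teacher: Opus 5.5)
Your proposal is correct and follows essentially the same route as the paper: you write $1-\mathrm{Acc}_{\mathrm{rank}}(r_d)$ using the closed form from Theorem~\ref{thm:ranking_advantage}, use the decomposition Eq.~\eqref{eq:decomp} to identify the numerator as $2\sum_{0\le a<b<K}q_aq_b$, and invoke Lemma~\ref{lem:discrete_event} for the interpretation. Your explicit event-level computation $\mathbb{P}(A^c\cap B)=2\sum_{0\le a<b<K}q_aq_b$ is a slightly more careful version of the paper's one-line remark that the numerator counts exactly the pairs $r_d$ fails to rank.
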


\begin{proof}
Using the decomposition in Eq.~\eqref{eq:decomp},
\begin{align}
    1-\mathrm{Acc}_{\mathrm{rank}}(r_d)
    &=
    \frac{
    \left(1-\sum_{k=0}^K q_k^2\right) - 2\sum_{a=0}^{K-1}q_aq_K
    }
    {
    1-\sum_{k=0}^K q_k^2
    }
    =
    \frac{
    2\sum_{0\le a<b<K}q_aq_b
    }
    {
    1-\sum_{k=0}^K q_k^2
    }.
\end{align}
The numerator counts (twice) the probability mass of unordered pairs of distinct \emph{imperfect} quality levels, which by Lemma~\ref{lem:discrete_event} are precisely the pairs $r_d$ fails to rank. This proves the corollary.
\end{proof}

\paragraph{Implication for GRPO.}
By Lemma~\ref{lem:advantage_order}, GRPO assigns larger update weights to samples with larger rewards. Therefore, a reward model that better preserves the true quality ordering also provides more reliable relative policy updates. The decomposed graded reward is strictly monotonic with respect to the number of satisfied sub-questions and thus preserves all pairwise quality comparisons. In contrast, the binary reward collapses all imperfect samples into the same reward, making GRPO unable to distinguish partially correct samples from severely incorrect ones. Consequently, Decomposed Graded Verification provides a denser and more order-preserving reward signal for GRPO.

\section{Other Results}
\subsection{Training Dynamics across Six Game-Based Tasks}
\label{six_data}

Fig.~\ref{fig:six_task_training} presents the training dynamics of pairwise and pointwise verification across our six constructed rule-governed tasks. Following the evaluation protocol in the main text, we evaluate intermediate checkpoints every 10 training steps on the corresponding validation sets, using Gemini3.1-Pro~\citep{googledeepmind2026gemini31pro} as an evaluator not involved in data processing and providing ground-truth images as explicit references.

Across all six tasks, pairwise verification generally yields more sustained performance improvements and higher final scores, whereas pointwise verification exhibits more pronounced fluctuations and limited progress. This trend is consistent with the aggregate results in the main text. By directly comparing generated outputs with ground-truth references, pairwise verification reduces reliance on internal target-state inference and can provide more reliable reward signals for policy optimization. These results further support the effectiveness of reference-based verification across different types of rule-governed visual generation tasks.

\begin{figure}[h]
    \centering
    \includegraphics[width=.95\textwidth]{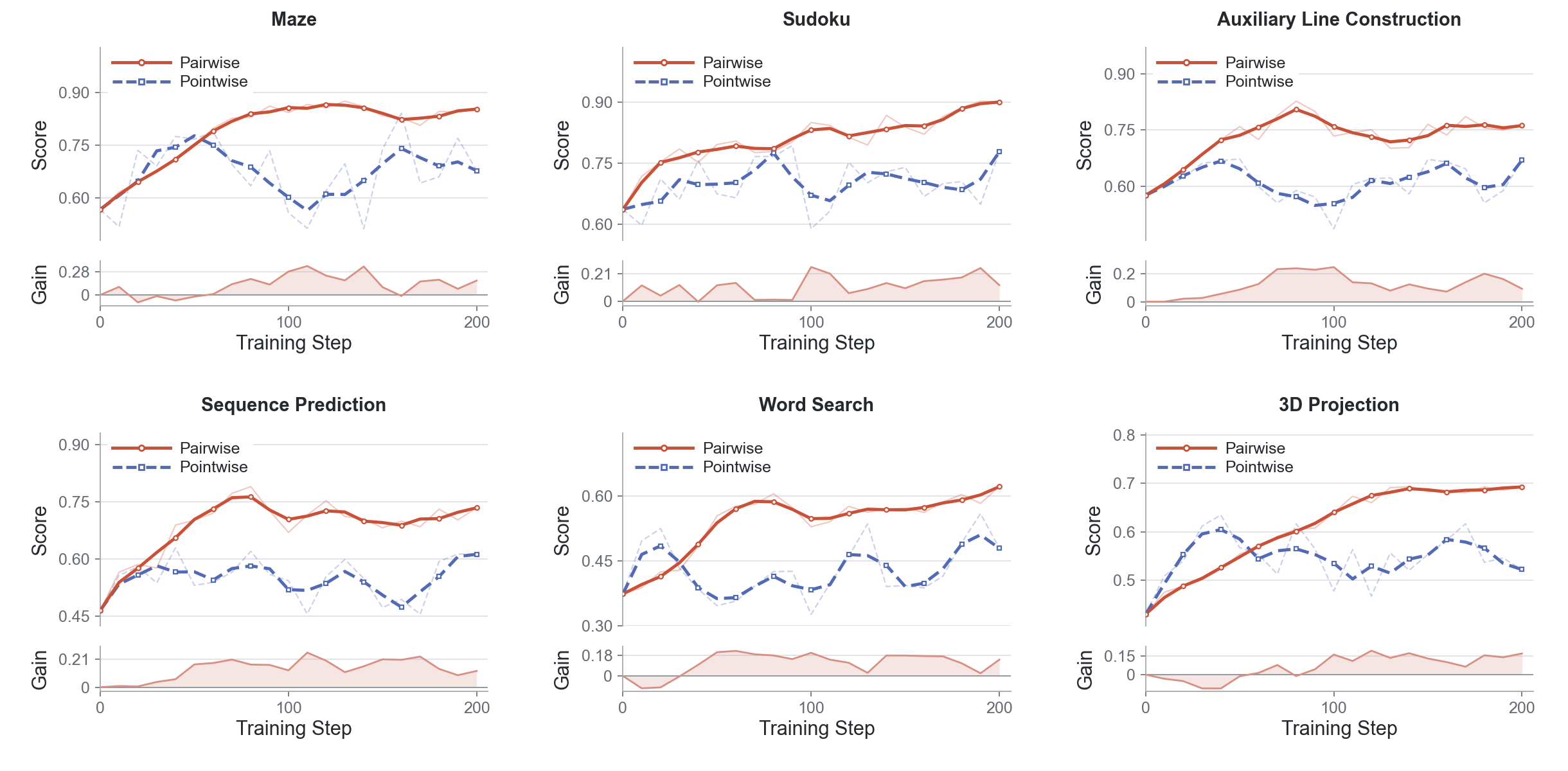}
    \caption{
    Training dynamics of pairwise and pointwise verification across
    six rule-governed tasks. Each panel shows validation scores
    and the score gap between the two methods.
    }
    \label{fig:six_task_training}
\end{figure}

\subsection{Sub-question-level Verification Accuracy}
\label{app:subquestion_accuracy}

The reliability of decomposed graded scoring depends on whether the verifier correctly judges individual sub-questions against the generated image. Even when the decomposition is appropriate, incorrect local judgments can introduce noise into the aggregated reward. We therefore directly evaluate sub-question-level verification accuracy on a held-out set of 200 binary sub-question samples. Three authors and two Ph.D. students each annotate 40 samples, collectively providing human labels for all 200 samples. We additionally obtain binary annotations from GPT-5.5 and Gemini-3.1-Pro on the same evaluation set. We compare the base model, the SFT-trained graded verifier, and the graded verifier further optimized with Decoupled RL, reporting accuracy against each annotation source separately.

\begin{table}[h]
\centering
\caption{
Sub-question-level verification accuracy (\%) on 200 held-out samples.
}
\label{tab:subquestion_accuracy}
\vspace{-2mm}
\begin{threeparttable}
\renewcommand{\arraystretch}{1.15}
\resizebox{0.7\linewidth}{!}{
\begin{tabular}{l|ccc}
\toprule
\multirow{2}{*}{\textbf{Methods}}
& \multicolumn{3}{c}{\textbf{Annotation Source}} \\
\cmidrule(lr){2-4}
& \textbf{GPT-5.5}
& \textbf{Gemini-3.1-Pro}
& \textbf{Human} \\
\midrule

Qwen3-VL-8B
& 68.5 & 69.5 & 67.0 \\

\ \ + SFT(Graded)
& 76.0 & 76.5 & 74.5 \\

\rowcolor{gray!10}
\ \ \textbf{+ SFT(Graded) + RL(Graded)}
& \textbf{94.5} & \textbf{96.0} & \textbf{94.0} \\

\bottomrule
\end{tabular}
}
\end{threeparttable}
\vspace{-3mm}
\end{table}

As shown in Table~\ref{tab:subquestion_accuracy}, Decoupled RL substantially improves fine-grained verification accuracy across all three annotation sources. These results support the effectiveness of Decoupled RL in improving sub-question-level binary judgments and provide empirical evidence for the reliability of the local decisions underlying decomposed graded scoring.

\subsection{Evaluation on Qwen-Image-2512}
\label{app:qwen_image_2512}

To examine whether the effectiveness of our Decomposed Graded Verifier extends to a larger text-to-image generation backbone, we further evaluate Qwen-Image-2512 (20B), complementing the experiments on Stable Diffusion 3.5 Medium (2.5B) and FLUX.1-dev (12B). We compare the original model with its reinforcement learning variants guided by the Discrete Binary Verifier and the Decomposed Graded Verifier, respectively, using GenEval and CLIPScore for evaluation.

\begin{table}[h]
\centering
\caption{
Performance of Qwen-Image-2512 on GenEval and CLIPScore
under different reward models.
}
\label{tab:qwen_image_2512}
\vspace{-2mm}
\begin{threeparttable}
\renewcommand{\arraystretch}{1.15}
\resizebox{0.99\linewidth}{!}{
\begin{tabular}{l|ccccccc|c}
\toprule
\multirow{2}{*}{\textbf{Methods}}
& \multicolumn{7}{c|}{\textbf{GenEval}}
& \multirow{2}{*}{\textbf{CLIPScore}} \\
\cmidrule(lr){2-8}
& Single Obj. & Two Obj. & Counting & Colors
& Position & Attri. Bind. & Overall & \\
\midrule

Qwen-Image-2512
& 0.99 & 0.90 & 0.77 & 0.86
& 0.38 & 0.52 & 0.74 & 34.1 \\

\ \ +RL(Discrete Binary Verifier)
& 1.00 & 0.92 & 0.79 & 0.87
& 0.42 & 0.55 & 0.76 & 34.4 \\

\rowcolor{gray!10}
\ \ \textbf{+RL(Decomposed Graded Verifier)}
& \textbf{1.00} & \textbf{0.96} & \textbf{0.85}
& \textbf{0.92} & \textbf{0.51} & \textbf{0.69}
& \textbf{0.82} & \textbf{35.3} \\

\bottomrule
\end{tabular}
}
\end{threeparttable}
\vspace{-3mm}
\end{table}

As shown in Table~\ref{tab:qwen_image_2512}, the Decomposed Graded Verifier improves the overall GenEval score from 0.74 to 0.82, outperforming the binary-reward counterpart by 0.06. The largest gains over the original model occur in attribute binding and position, which improve from 0.52 to 0.69 and from 0.38 to 0.51, respectively. These improvements suggest that fine-grained graded rewards are particularly useful for learning compositional requirements involving attribute associations and spatial relations. CLIPScore also increases from 34.1 to 35.3, compared with 34.4 under binary rewards. Together, these results extend the evidence for the effectiveness of decomposed graded rewards to a larger generation backbone with stronger baseline performance.

\subsection{Evaluation on InternVL3-8B}
\label{app:internvl_verifier}

To examine whether the benefits of our training strategy extend beyond the Qwen model family, we additionally adopt InternVL3-8B as the verifier backbone. We evaluate the base model, its graded SFT variant, and the graded verifier further optimized with Decoupled RL on ViVerBench.

\begin{table}[h]
\centering
\caption{
Performance on ViVerBench with InternVL3-8B as the verifier backbone.
We report seven selected sub-metrics with notable improvements after Decoupled RL.
Overall denotes the mean score across all 16 task categories, including those not displayed.
}
\label{tab:internvl_verifier}
\vspace{-2mm}
\begin{threeparttable}
\renewcommand{\arraystretch}{1.15}
\resizebox{0.99\linewidth}{!}{
\begin{tabular}{l|cc|cc|ccc|c}
\toprule
\multirow{2}{*}{\textbf{Methods}}
& \multicolumn{2}{c|}{\textbf{Concept Existence}}
& \multicolumn{2}{c|}{\textbf{Object Relation}}
& \multicolumn{3}{c|}{\textbf{Image Annotation}}
& \multirow{2}{*}{\shortstack{\textbf{Overall}\\\textbf{(16 tasks)}}} \\
\cmidrule(lr){2-3}
\cmidrule(lr){4-5}
\cmidrule(lr){6-8}
& Obj. & Attr. & Spat. & N-Spat.
& BBox & Point & Count & \\
\midrule

InternVL3-8B
& 0.513 & 0.578 & 0.667 & 0.619
& 0.750 & 0.619 & 0.604 & 0.594 \\

\ \ +SFT(Graded)
& 0.526 & 0.585 & 0.652 & 0.632
& 0.757 & 0.611 & 0.618 & 0.606 \\

\rowcolor{gray!10}
\ \ \textbf{+SFT(Graded)+RL(Graded)}
& \textbf{0.678} & \textbf{0.684}
& \textbf{0.707} & \textbf{0.724}
& \textbf{0.821} & \textbf{0.665}
& \textbf{0.649} & \textbf{0.652} \\

\bottomrule
\end{tabular}
}
\end{threeparttable}
\vspace{-3mm}
\end{table}

As shown in Table~\ref{tab:internvl_verifier}, Decoupled RL improves the mean score across all 16 task categories from 0.606 to 0.652 compared with graded SFT, with the seven displayed sub-metrics highlighting selected categories with notable gains. These results demonstrate that our training strategy is also effective on InternVL3-8B, extending its applicability beyond the Qwen family.

\subsection{Sensitivity to the Number of Sub-questions}
\label{app:decomposition_granularity}

To investigate how decomposition granularity affects global score estimation, we evaluate the trained Decomposed Graded Verifier on 200 held-out test samples. We explicitly instruct the verifier through the system prompt to generate sub-questions within three ranges: at most 10, 11--20, and 21--30. This experiment examines whether increasing the number of sub-questions consistently improves scoring accuracy or introduces additional errors through excessive decomposition.

\begin{table}[h]
\centering
\caption{
Effect of the number of sub-questions on global score estimation
on 200 held-out test samples. Lower mean alignment error is better.
}
\label{tab:decomposition_granularity}
\vspace{-2mm}
\begin{threeparttable}
\renewcommand{\arraystretch}{1.15}
\setlength{\tabcolsep}{12pt}
\begin{tabular}{l|c}
\toprule
\textbf{Number of Sub-questions}
& \textbf{Mean Alignment Error} $\downarrow$ \\
\midrule

$\leq 10$ & 0.083 \\

\rowcolor{gray!10}
11--20 & \textbf{0.039} \\

21--30 & 0.072 \\

\bottomrule
\end{tabular}
\end{threeparttable}
\vspace{-3mm}
\end{table}

As shown in Table~\ref{tab:decomposition_granularity}, the 11--20 setting achieves the lowest mean alignment error of 0.039, compared with 0.083 for at most 10 sub-questions and 0.072 for 21--30 sub-questions. These results suggest that an appropriate decomposition granularity improves scoring accuracy, while simply increasing the number of sub-questions does not guarantee better performance.

Excessive decomposition can introduce redundant questions or requirements unsupported by the original prompt. For example, given ``A red apple is placed on a table,'' the verifier may repeatedly ask whether an apple exists or introduce questions such as ``Is the apple a fruit?'' and ``Is the apple round?'' that do not correspond to explicitly requested visual constraints. Such questions can distort the aggregated score by repeatedly weighting the same requirement or introducing irrelevant criteria. We also observe that generating more sub-questions can lead to violations of the required output format, reducing the reliability of the final score. Together, these observations highlight the importance of balancing fine-grained coverage with faithful and non-redundant decomposition.

\section{Limitations and Future Works}\label{limitation}
Our world-modeling evaluation is restricted to game-based environments with explicit rules, where the ground-truth image is uniquely determined by the initial task state and rule instructions. We adopt these controlled tasks primarily to evaluate the effectiveness of our pairwise verification paradigm. Although our results demonstrate the advantages of pairwise verification in these settings, these tasks capture only part of the complexity and diversity of real-world environments.

In future work, we plan to extend our investigation to open-ended settings with non-unique ground truths, construct larger and more realistic world-modeling datasets, and scale both the verifier and the generator to stronger backbones. Our long-term goal is to develop multimodal generative models that can reason about, simulate, and interact with complex real-world scenes.

\section{Decomposition Example}
\begin{tcolorbox}[title = {Text-to-Image (1)}, breakable, fontupper=\small, fonttitle=\small]

\begin{minipage}[c]{0.4\textwidth}
\vspace{0pt}
\includegraphics[width=\linewidth]{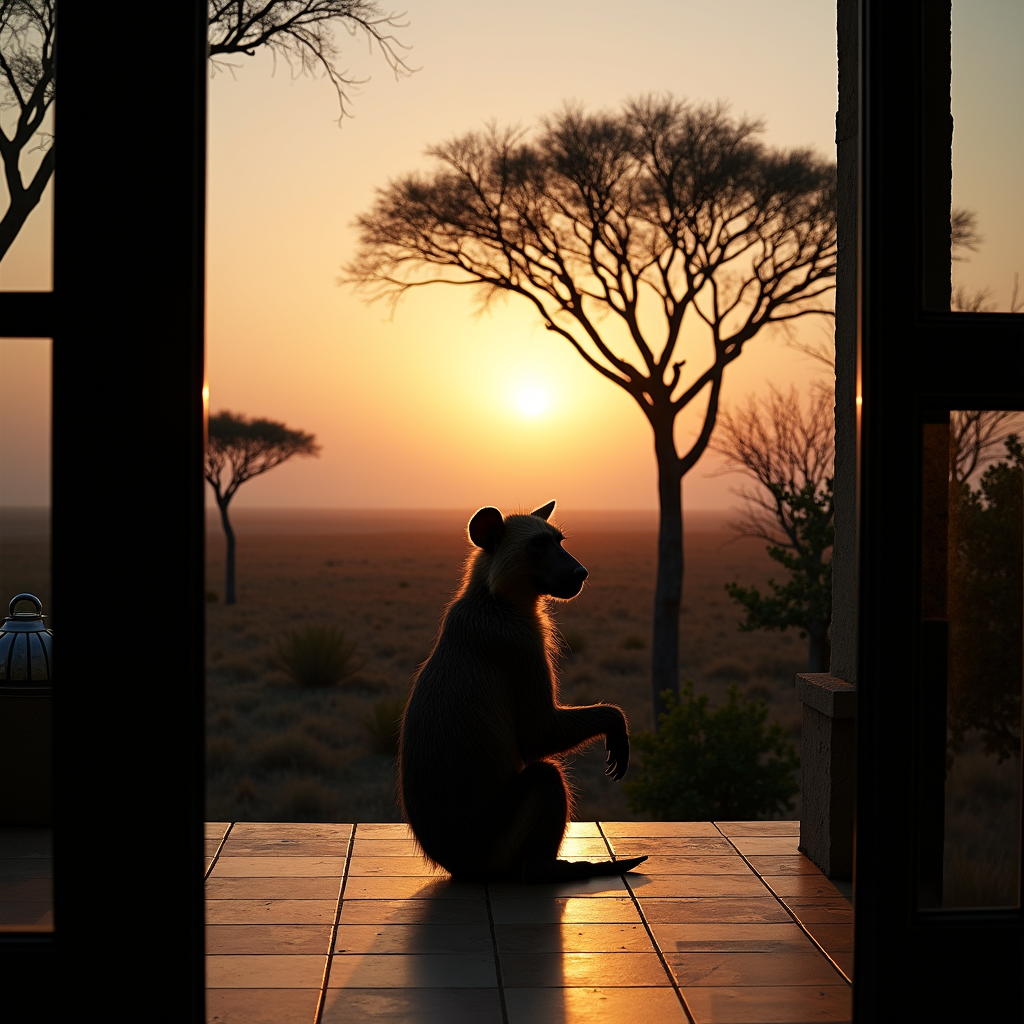}
\end{minipage}
\hfill
\begin{minipage}[c]{0.58\textwidth}
\vspace{0pt}
\textbf{Prompt}:

\vspace{0.4em}

A baboon is silhouetted on a patio in a top-down, monochromatic view; acacia trees stark against the golden hour dusk.

\end{minipage}

\vspace{0.8em}

\noindent\textbf{Decomposition Result:}

\vspace{0.3em}

\begin{itemize}
    \item Is there a baboon present in the image? $\rightarrow$ Yes, Score: 1
    \item Is the baboon depicted in silhouette? $\rightarrow$ Yes, Score: 1
    \item Is the baboon situated on a patio or tiled surface? $\rightarrow$ Yes, Score: 1
    \item Is the camera perspective a top-down view? $\rightarrow$ No, Score: 0
    \item Is the image monochromatic? $\rightarrow$ No, Score: 0
    \item Are there acacia trees visible in the background? $\rightarrow$ Yes, Score: 1
    \item Is the scene set during golden hour dusk? $\rightarrow$ Yes, Score: 1
\end{itemize}

\vspace{0.3em}

\noindent\textbf{Final Alignment Score:}

\vspace{0.3em}

$$\frac{5}{7}=0.71$$

\end{tcolorbox}

\begin{tcolorbox}[title = {Text-to-Image (2)}, breakable, fontupper=\small, fonttitle=\small]

\begin{minipage}[c]{0.4\textwidth}
\vspace{0pt}
\includegraphics[width=\linewidth]{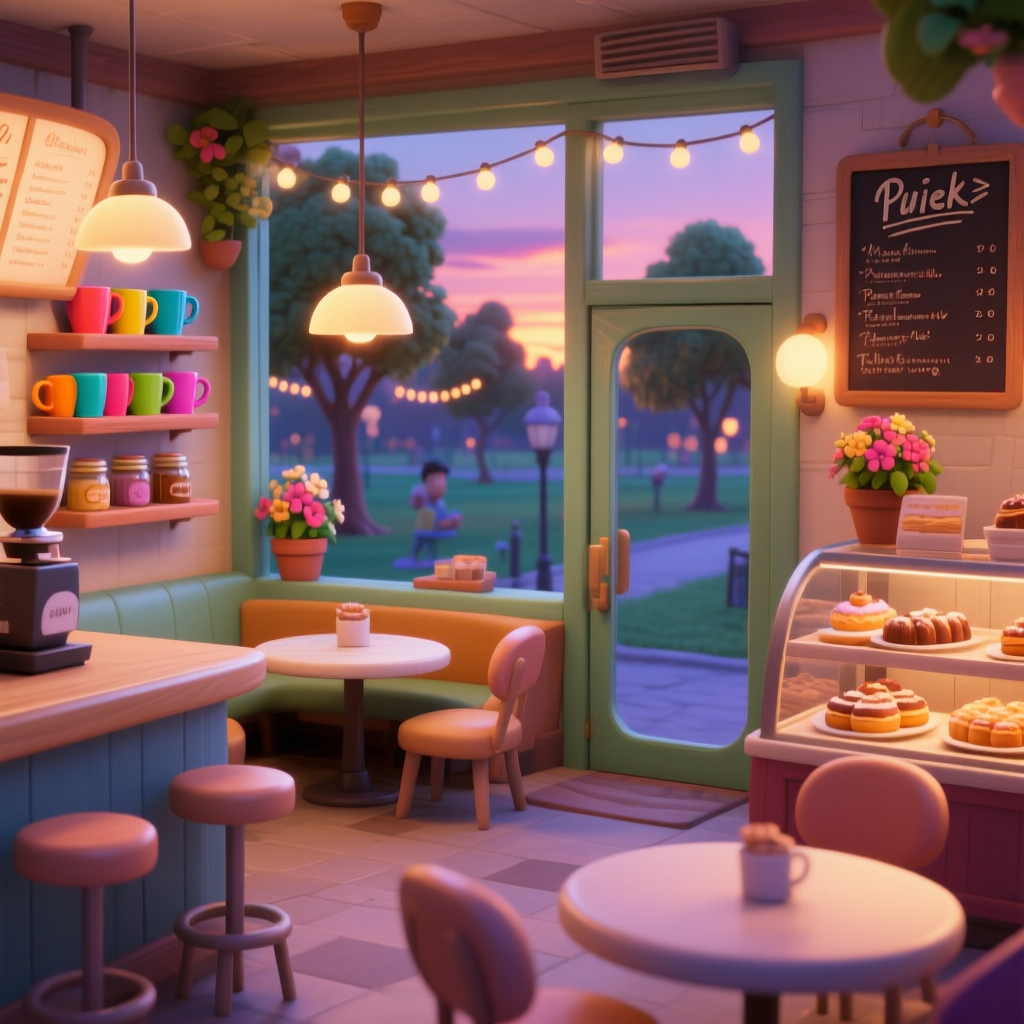}
\end{minipage}
\hfill
\begin{minipage}[c]{0.58\textwidth}
\vspace{0pt}
\textbf{Prompt}:

\vspace{0.4em}

Isometric 3D café interior, Pixar-style rendering, whimsical aesthetic. Features coffee counter, colorful shelves, rounded furniture, pastry case, and hanging lamps. Large glass door opens to a park with string lights at sunset. Warm glowing lighting, soft curved edges, cozy evening atmosphere.

\end{minipage}

\vspace{0.8em}

\noindent\textbf{Decomposition Result:}

\vspace{0.3em}

\begin{itemize}
    \item Is the image a 3D rendering of a café interior with a Pixar-style aesthetic? $\rightarrow$ Yes, Score: 1
    \item Is the viewpoint isometric? $\rightarrow$ No, Score: 0
    \item Does the interior feature a coffee counter? $\rightarrow$ Yes, Score: 1
    \item Are there shelves with colorful items? $\rightarrow$ Yes, Score: 1
    \item Are the furniture and objects rounded with soft curved edges? $\rightarrow$ Yes, Score: 1
    \item Does the café include both hanging lamps and a pastry case? $\rightarrow$ Yes, Score: 1
    \item Does a large glass door open to an outdoor park scene? $\rightarrow$ Yes, Score: 1
    \item Are string lights visible in the outdoor park scene? $\rightarrow$ Yes, Score: 1
    \item Is the scene set at sunset with warm, glowing, cozy lighting? $\rightarrow$ Yes, Score: 1
\end{itemize}

\vspace{0.3em}

\noindent\textbf{Final Alignment Score:}

\vspace{0.3em}

$$\frac{8}{9}=0.89$$

\end{tcolorbox}

\begin{tcolorbox}[title = {Text-to-Image (3)}, breakable, fontupper=\small, fonttitle=\small]

\begin{minipage}[c]{0.4\textwidth}
\vspace{0pt}
\includegraphics[width=\linewidth]{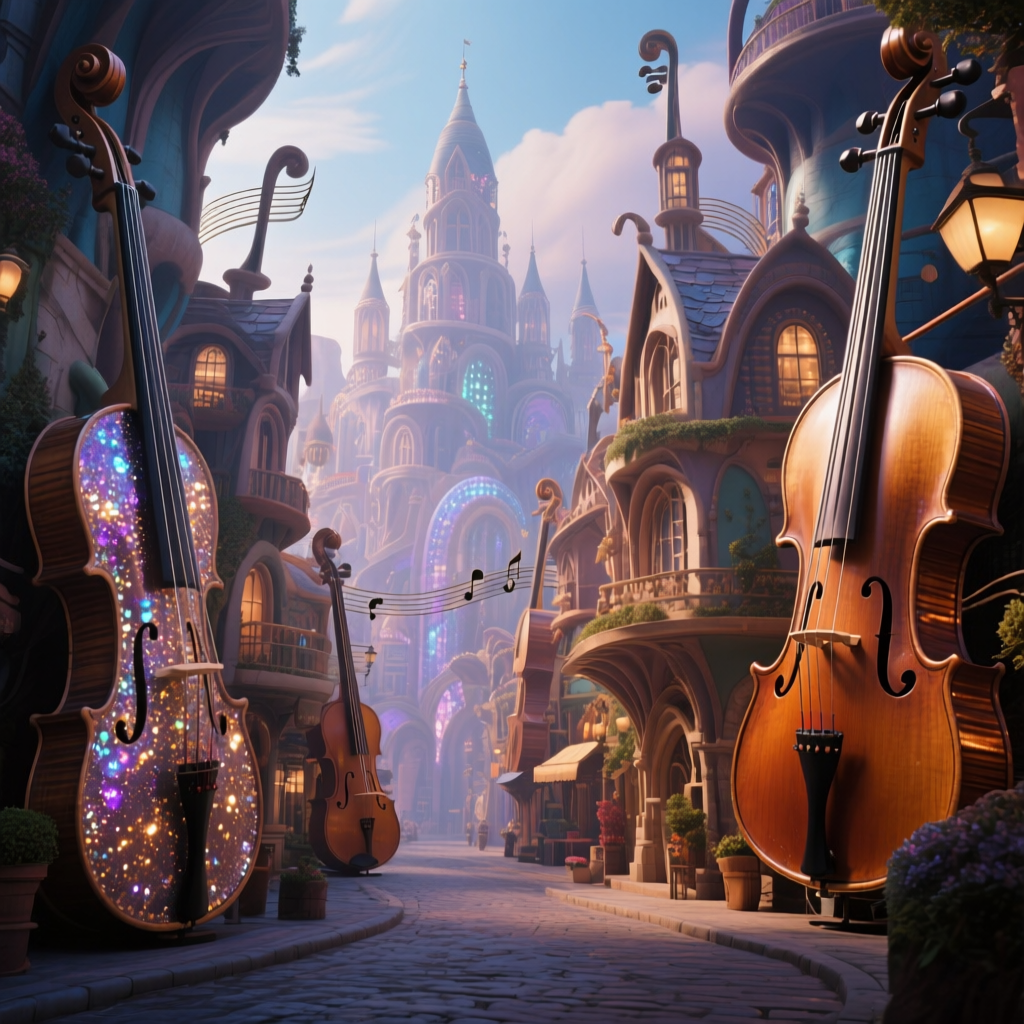}
\end{minipage}
\hfill
\begin{minipage}[c]{0.58\textwidth}
\vspace{0pt}
\textbf{Prompt}:

\vspace{0.4em}

Imagine an enchanted city where the streets resonate like symphonies, and the architecture is formed from orchestral instruments and glimmering harmonies.

\end{minipage}

\vspace{0.8em}

\noindent\textbf{Decomposition Result:}

\vspace{0.3em}

\begin{itemize}
    \item Does the image show an enchanted city? $\rightarrow$ Yes, Score: 1
    \item Is there a clear city street scene? $\rightarrow$ Yes, Score: 1
    \item Are musical notes or symphony elements visible? $\rightarrow$ Yes, Score: 1
    \item Are orchestral instruments visible? $\rightarrow$ Yes, Score: 1
    \item Are the instruments integrated into the cityscape? $\rightarrow$ Yes, Score: 1
    \item Is the architecture formed from instruments? $\rightarrow$ No, Score: 0
    \item Are glimmering lights or particles visible? $\rightarrow$ Yes, Score: 1
    \item Are the glimmering elements linked to the instruments? $\rightarrow$ Yes, Score: 1
    \item Does the scene combine city, music, and magic coherently? $\rightarrow$ Yes, Score: 1
\end{itemize}

\vspace{0.3em}

\noindent\textbf{Final Alignment Score:}

\vspace{0.3em}

$$\frac{8}{9}=0.89$$

\end{tcolorbox}

\begin{tcolorbox}[title = {Text-to-Image (4)}, breakable, fontupper=\small, fonttitle=\small]

\begin{minipage}[c]{0.4\textwidth}
\vspace{0pt}
\includegraphics[width=\linewidth]{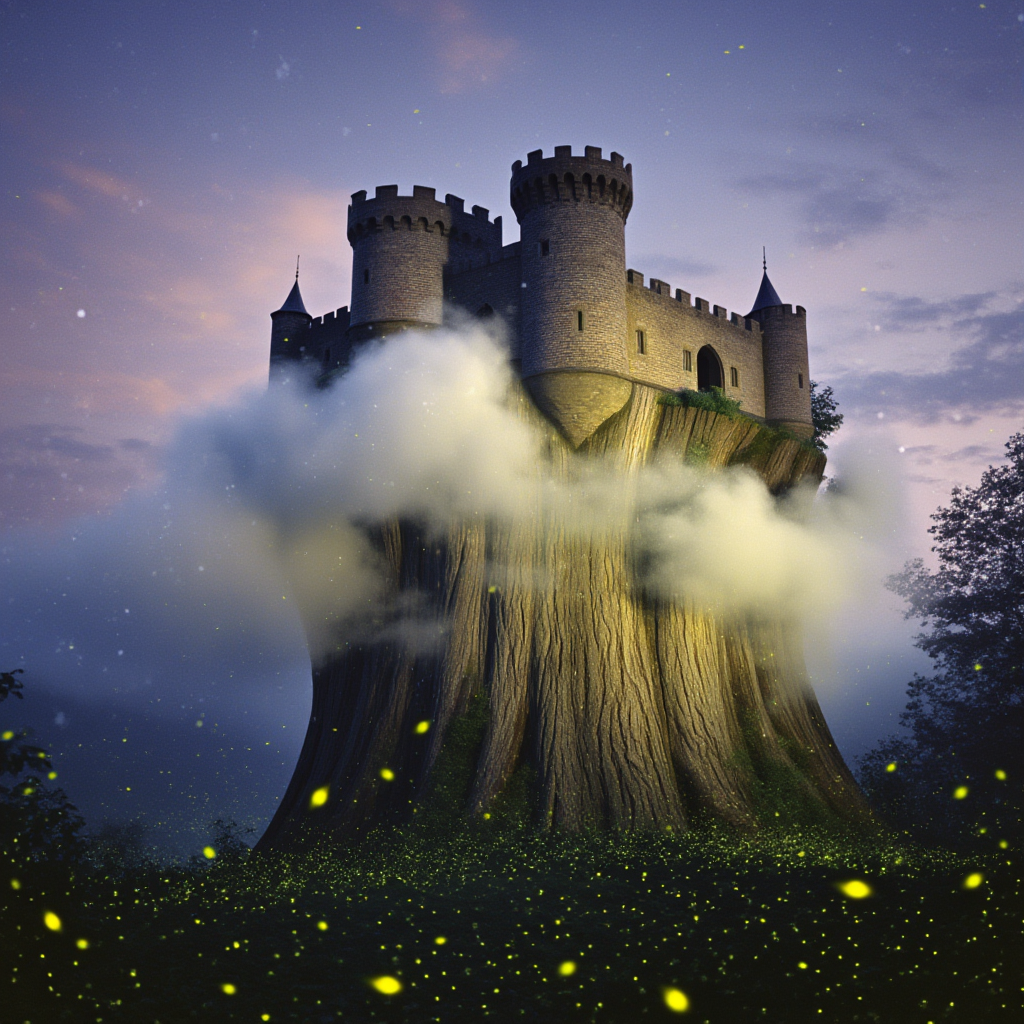}
\end{minipage}
\hfill
\begin{minipage}[c]{0.58\textwidth}
\vspace{0pt}
\textbf{Prompt}:

\vspace{0.4em}

A retro-themed photograph of a medieval castle resting atop a colossal tree trunk, enveloped by clouds, shimmering fireflies, and a dusky twilight sky. The scene is illuminated by fireflies.

\end{minipage}

\vspace{0.8em}

\noindent\textbf{Decomposition Result:}

\vspace{0.3em}

\begin{itemize}
    \item Does the image depict a medieval castle? $\rightarrow$ Yes, Score: 1
    \item Is the castle resting atop a colossal tree trunk? $\rightarrow$ Yes, Score: 1
    \item Is the scene enveloped by clouds? $\rightarrow$ Yes, Score: 1
    \item Are shimmering fireflies visible in the image? $\rightarrow$ Yes, Score: 1
    \item Is the sky a dusky twilight sky? $\rightarrow$ Yes, Score: 1
    \item Is the image a retro-themed photograph? $\rightarrow$ No, Score: 0
    \item Is the scene illuminated by the fireflies? $\rightarrow$ No, Score: 0
\end{itemize}

\vspace{0.3em}

\noindent\textbf{Final Alignment Score:}

\vspace{0.3em}

$$\frac{5}{7}=0.71$$

\end{tcolorbox}

\begin{tcolorbox}[title = {Text-to-Image (5)}, breakable, fontupper=\small, fonttitle=\small]

\begin{minipage}[c]{0.4\textwidth}
\vspace{0pt}
\includegraphics[width=\linewidth]{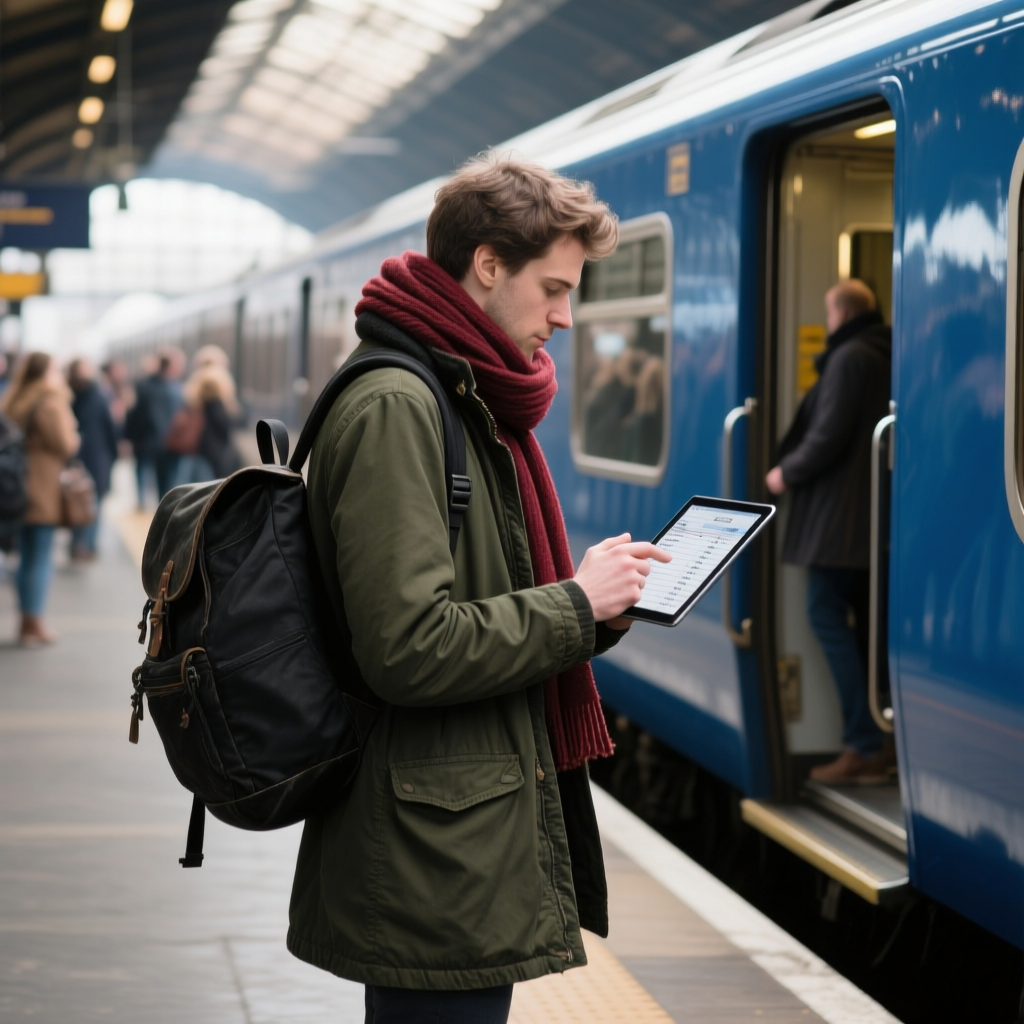}
\end{minipage}
\hfill
\begin{minipage}[c]{0.58\textwidth}
\vspace{0pt}
\textbf{Prompt}:

\vspace{0.4em}

In a bustling train station, a traveler in a dark olive jacket and maroon scarf wears a well-worn black backpack. They focus on a tablet, likely checking schedules. Nearby, a polished blue train waits at the platform, ready to welcome passengers.

\end{minipage}

\vspace{0.8em}

\noindent\textbf{Decomposition Result:}

\vspace{0.3em}

\begin{itemize}
    \item Is the scene set in a bustling train station? $\rightarrow$ Yes, Score: 1
    \item Is the traveler wearing a dark olive jacket? $\rightarrow$ Yes, Score: 1
    \item Is the traveler wearing a maroon scarf? $\rightarrow$ Yes, Score: 1
    \item Is the traveler wearing a well-worn black backpack? $\rightarrow$ No, Score: 0
    \item Is the traveler focusing on a tablet? $\rightarrow$ Yes, Score: 1
    \item Is there a polished blue train waiting at the platform? $\rightarrow$ Yes, Score: 1
    \item Is the train's door open to welcome passengers? $\rightarrow$ Yes, Score: 1
\end{itemize}

\vspace{0.3em}

\noindent\textbf{Final Alignment Score:}

\vspace{0.3em}

$$\frac{6}{7}=0.86$$

\end{tcolorbox}

\begin{tcolorbox}[title = {Rule-Governed World Worlding Pairwise}, breakable, fontupper=\small, fonttitle=\small]

\begin{minipage}[c]{0.4\textwidth}
\vspace{0pt}
\centering
\includegraphics[width=\linewidth]{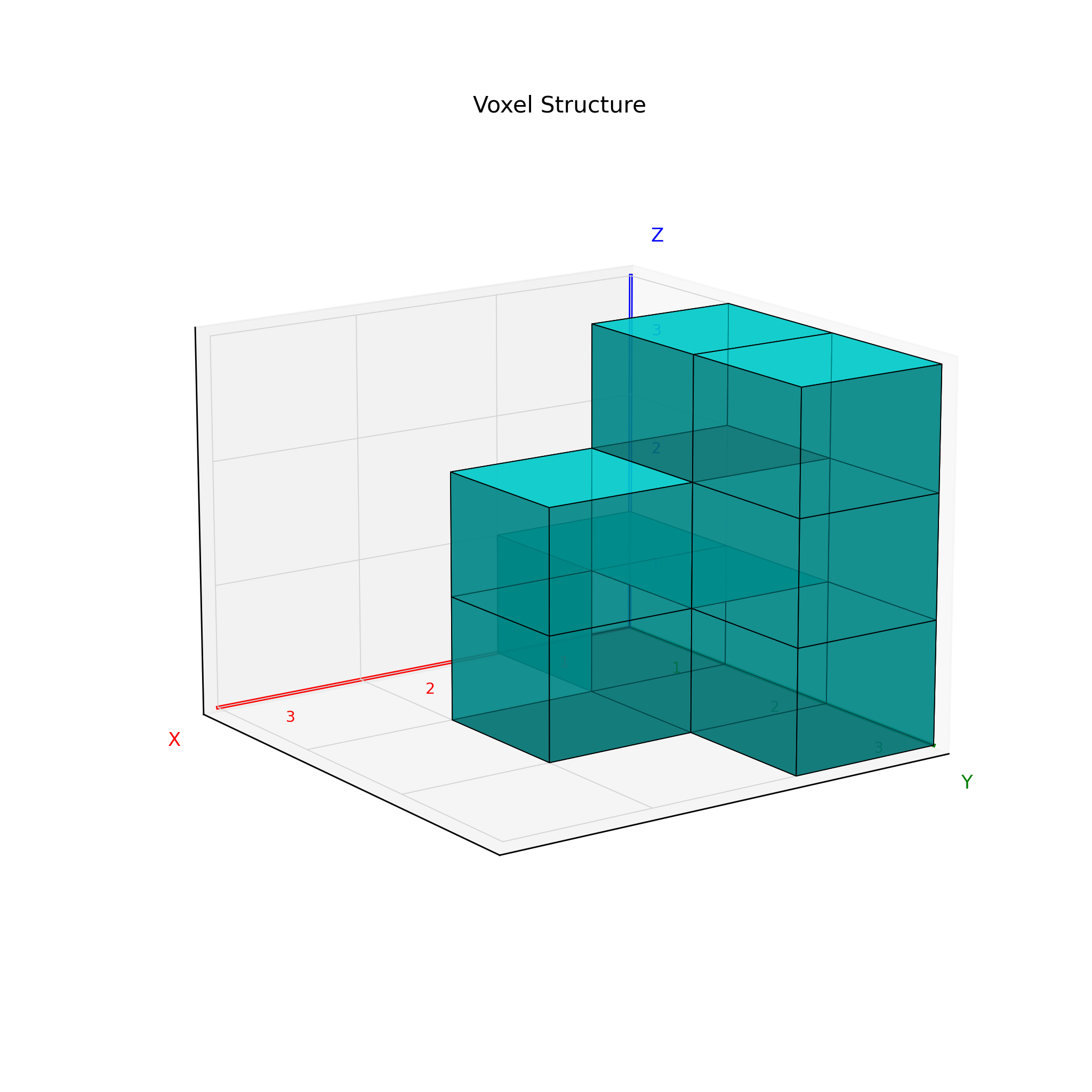}

\vspace{0.3em}
{\scriptsize Initial Image}
\end{minipage}
\hfill
\begin{minipage}[c]{0.58\textwidth}
\vspace{0pt}
\textbf{Game Rule}:

\vspace{0.4em}

The objective is to generate a 2D orthographic front-view projection of the given 3D voxel structure. 
The front view is obtained by looking along the X-axis and projecting the structure onto the Y-Z plane.

\end{minipage}

\vspace{0.8em}

\begin{minipage}[t]{0.4\textwidth}
\centering
\includegraphics[width=\linewidth]{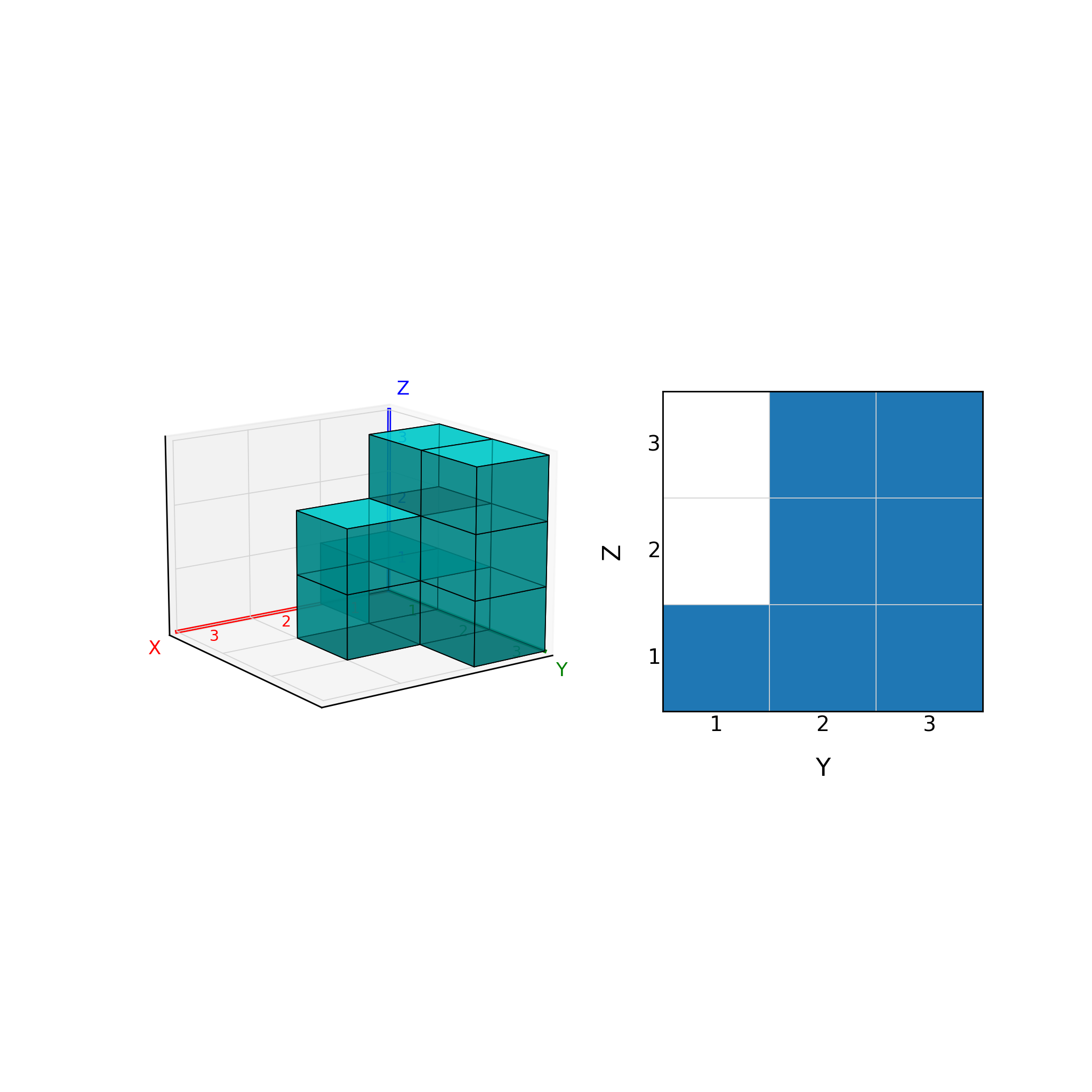}

\vspace{0.3em}
{\scriptsize Reference Image}
\end{minipage}
\hfill
\begin{minipage}[t]{0.4\textwidth}
\centering
\includegraphics[width=\linewidth]{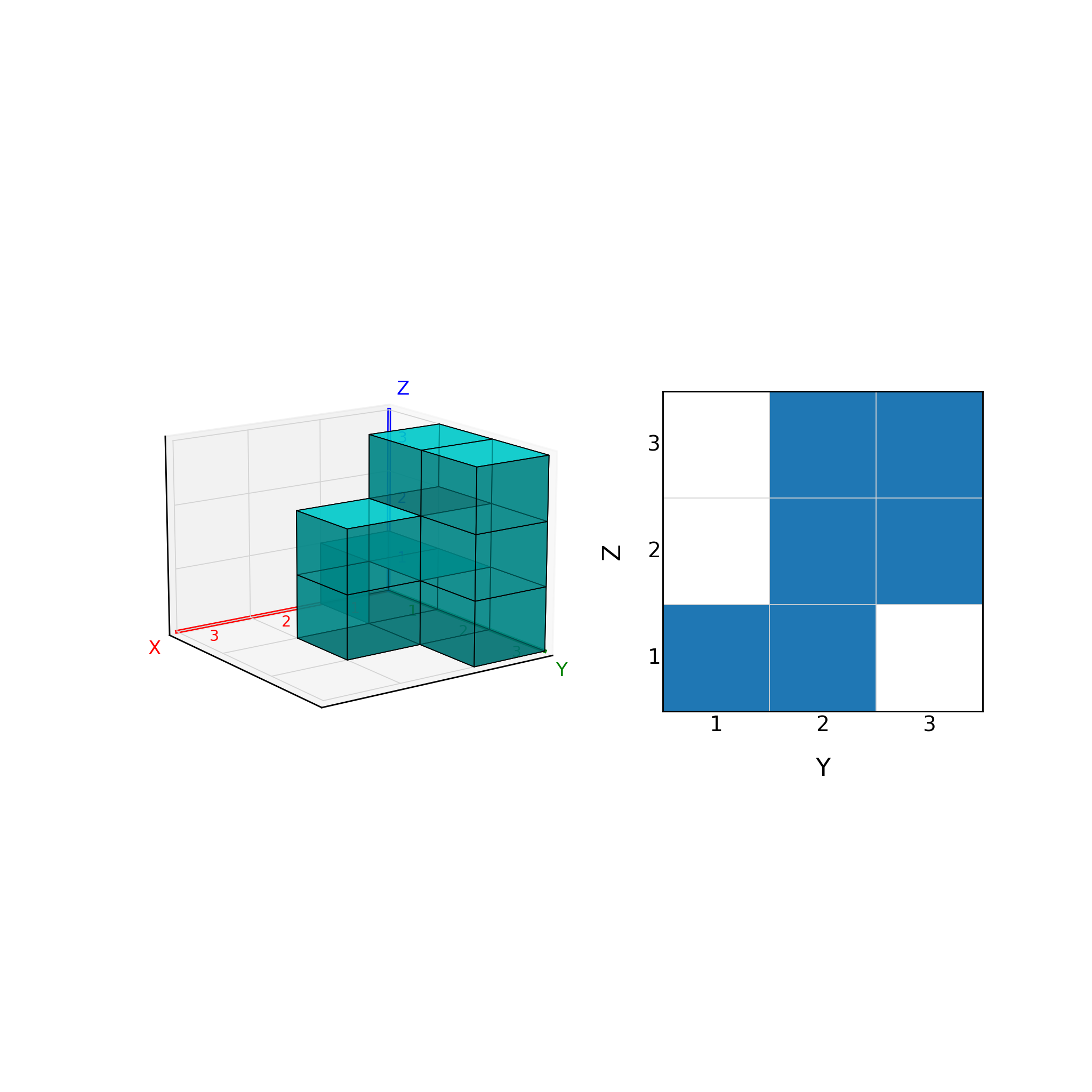}

\vspace{0.3em}
{\scriptsize Generated Image}
\end{minipage}

\vspace{0.8em}

\textbf{Decomposition Reasoning}:

\vspace{0.4em}

So, let's tackle this step by step.

\vspace{0.4em}

The task is to evaluate whether the generated image correctly produces the 2D orthographic front-view projection of the given 3D voxel structure. 
Since the front view is obtained by looking along the X-axis, the projection should be represented on the Y-Z plane.

\vspace{0.4em}

I use the Reference Image as the correct target and compare the Generated Image with it cell by cell. 
Each position in the $3 \times 3$ Y-Z grid can be treated as a small sub-question: whether the generated cell matches the corresponding reference cell.

\vspace{0.4em}

The Reference Image shows the projection as follows:

\vspace{0.3em}

\begin{itemize}
    \item $Z=3$: empty, filled, filled
    \item $Z=2$: empty, filled, filled
    \item $Z=1$: filled, filled, filled
\end{itemize}

\vspace{0.3em}

Based on the Reference Image, I decompose the comparison into the following cell-level checks:

\vspace{0.3em}

\begin{itemize}
    \item Is the cell at $(Y=1,Z=1)$ correct? The Reference Image fills this cell, and the Generated Image also fills it. $\rightarrow$ Yes, Score: 1
    \item Is the cell at $(Y=2,Z=1)$ correct? The Reference Image fills this cell, and the Generated Image also fills it. $\rightarrow$ Yes, Score: 1
    \item Is the cell at $(Y=3,Z=1)$ correct? The Reference Image fills this cell, but the Generated Image leaves it empty. $\rightarrow$ No, Score: 0
    \item Is the cell at $(Y=1,Z=2)$ correct? The Reference Image leaves this cell empty, and the Generated Image also leaves it empty. $\rightarrow$ Yes, Score: 1
    \item Is the cell at $(Y=2,Z=2)$ correct? The Reference Image fills this cell, and the Generated Image also fills it. $\rightarrow$ Yes, Score: 1
    \item Is the cell at $(Y=3,Z=2)$ correct? The Reference Image fills this cell, and the Generated Image also fills it. $\rightarrow$ Yes, Score: 1
    \item Is the cell at $(Y=1,Z=3)$ correct? The Reference Image leaves this cell empty, and the Generated Image also leaves it empty. $\rightarrow$ Yes, Score: 1
    \item Is the cell at $(Y=2,Z=3)$ correct? The Reference Image fills this cell, and the Generated Image also fills it. $\rightarrow$ Yes, Score: 1
    \item Is the cell at $(Y=3,Z=3)$ correct? The Reference Image fills this cell, and the Generated Image also fills it. $\rightarrow$ Yes, Score: 1
\end{itemize}

\vspace{0.4em}

From these sub-question judgements, the Generated Image matches the Reference Image in 8 out of 9 cells. 
The only mismatch is at $(Y=3,Z=1)$, where the Reference Image shows an empty cell but the Generated Image marks it as filled.

\vspace{0.4em}

Thus, there are 9 sub-questions in total, with 8 correct and 1 incorrect.

\[
\text{Score}=\frac{8}{9}=0.89.
\]

\end{tcolorbox}

\begin{tcolorbox}[title = {Rule-Governed World Worlding Pointwise}, breakable, fontupper=\small, fonttitle=\small]

\begin{minipage}[c]{0.4\textwidth}
\vspace{0pt}
\centering
\includegraphics[width=\linewidth]{images/initial.png}

\vspace{0.3em}
{\scriptsize Initial Image}
\end{minipage}
\hfill
\begin{minipage}[c]{0.4\textwidth}
\vspace{0pt}
\centering
\includegraphics[width=\linewidth]{images/generated.png}

\vspace{0.3em}
{\scriptsize Generated Image}
\end{minipage}

\vspace{0.8em}

\textbf{Decomposition Reasoning}:

\vspace{0.4em}

So, let's first work out the target projection implied by the initial 3D structure.

\vspace{0.4em}

The required view is produced by observing the voxel structure along the X-axis. 
In this process, the X dimension is removed, and the remaining projection is represented on the Y-Z plane, with Y as the horizontal axis and Z as the vertical axis.

\vspace{0.4em}

Thus, for every $(Y,Z)$ coordinate, I need to decide whether the projection cell should be occupied. 
If at least one voxel exists somewhere along the X direction at that same $(Y,Z)$ coordinate, the cell should be filled; otherwise, it should be empty.

\vspace{0.4em}

By inspecting the initial 3D structure, the expected occupancy pattern is:

\vspace{0.3em}

\begin{itemize}
    \item For $Y=1$, the structure occupies $Z=1$ and $Z=2$.
    \item For $Y=2$, the structure occupies $Z=1$, $Z=2$, and $Z=3$.
    \item For $Y=3$, the structure occupies $Z=1$, $Z=2$, and $Z=3$.
\end{itemize}

\vspace{0.3em}

So the inferred target projection should mark the following cells as filled:
\[
\begin{aligned}
&(Y=1,Z=1),\ (Y=1,Z=2),\\
&(Y=2,Z=1),\ (Y=2,Z=2),\ (Y=2,Z=3),\\
&(Y=3,Z=1),\ (Y=3,Z=2),\ (Y=3,Z=3).
\end{aligned}
\]

Next, I break the projection into individual cell-level questions and compare the generated result with this inferred target.

\vspace{0.3em}

\begin{itemize}
    \item Is $(Y=1,Z=1)$ correctly represented? The inferred target requires this cell to be filled, and the generated projection also fills it. $\rightarrow$ Yes, Score: 1

    \item Is $(Y=1,Z=2)$ correctly represented? The inferred target requires this cell to be filled, but the generated projection leaves it empty. $\rightarrow$ No, Score: 0

    \item Is $(Y=1,Z=3)$ correctly represented? The inferred target requires this cell to be empty, and the generated projection also leaves it empty. $\rightarrow$ Yes, Score: 1

    \item Is $(Y=2,Z=1)$ correctly represented? The inferred target requires this cell to be filled, and the generated projection also fills it. $\rightarrow$ Yes, Score: 1

    \item Is $(Y=2,Z=2)$ correctly represented? The inferred target requires this cell to be filled, and the generated projection also fills it. $\rightarrow$ Yes, Score: 1

    \item Is $(Y=2,Z=3)$ correctly represented? The inferred target requires this cell to be filled, and the generated projection also fills it. $\rightarrow$ Yes, Score: 1

    \item Is $(Y=3,Z=1)$ correctly represented? The inferred target requires this cell to be filled, but the generated projection leaves it empty. $\rightarrow$ No, Score: 0

    \item Is $(Y=3,Z=2)$ correctly represented? The inferred target requires this cell to be filled, and the generated projection also fills it. $\rightarrow$ Yes, Score: 1

    \item Is $(Y=3,Z=3)$ correctly represented? The inferred target requires this cell to be filled, and the generated projection also fills it. $\rightarrow$ Yes, Score: 1
\end{itemize}

\vspace{0.4em}

After aggregating these judgements, the generated projection correctly represents most of the inferred Y-Z occupancy pattern. 
The two failed sub-questions correspond to $(Y=1,Z=2)$ and $(Y=3,Z=1)$, where the generated projection leaves the cells empty even though they should be filled.

\vspace{0.4em}

Therefore, among the 9 decomposed sub-questions, 7 are judged as correct and 2 are judged as incorrect.

\[
\text{Score}=\frac{7}{9}=0.78.
\]

\end{tcolorbox}

\begin{tcolorbox}[
    title = {Text-to-Image Verifier Prompt},
    breakable,
    colback = green!3,
    colframe = green!45!black,
    coltitle = white,
    colbacktitle = green!45!black,
    fontupper = \small,
    fonttitle = \bfseries\small,
    arc = 1.5mm,
    boxrule = 0.7pt
]

You are a powerful and trustworthy \textbf{Text-to-Image Alignment Evaluator}.

\vspace{0.5em}

Your responsibility is to evaluate the semantic alignment between a generated image and a text prompt. You must decompose the text prompt into verifiable atomic questions and score the consistency between the image and the prompt based strictly on visual evidence.

\vspace{0.5em}

\textbf{Inputs}

\vspace{0.5em}

You will be given the following inputs:

\begin{itemize}
    \item \textbf{Text Prompt}: the textual description used to generate the image.
    \item \textbf{Generated Image}: the image produced by the text-to-image model.
\end{itemize}

\vspace{0.5em}

\textbf{Evaluation Workflow}

\vspace{0.5em}

\textbf{Step 1: Prompt Decomposition}

\vspace{0.4em}

Decompose the text prompt into a set of atomic questions that can be directly verified from the image. Each question must originate from facts explicitly stated in the text prompt and test one clear semantic assertion.

\vspace{0.5em}

\textbf{Step 2: Image-Based Answering}

\vspace{0.4em}

Answer each question solely based on the visual information present in the Generated Image. Do not introduce assumptions beyond the text prompt. Do not infer visual information that is absent, ambiguous, or cannot be confirmed from the image.

\vspace{0.5em}

\textbf{Step 3: Consistency Judgement and Scoring}

\vspace{0.4em}

For each question, determine whether the image-based answer is consistent with the information explicitly specified in the text prompt. If the image-based answer is consistent with the prompt, the question receives 1 point. Otherwise, it receives 0 points.

\vspace{0.5em}

\textbf{Question Decomposition Principles}

\vspace{0.5em}

The decomposed questions must satisfy the following requirements:

\begin{itemize}
    \item Each question must be binary, with a Yes or No answer.
    \item Each question must be objectively verifiable through visual evidence.
    \item Questions involving subjective judgment, emotions, aesthetics, or other non-verifiable properties are forbidden.
    \item The number of questions should be naturally determined by the semantic complexity of the text prompt.
\end{itemize}

The decomposition should follow the semantic order below when applicable:

\begin{itemize}
    \item \textbf{Objects}: existence, quantity, and category.
    \item \textbf{Object Attributes}: color, shape, material, appearance, actions, and visible states.
    \item \textbf{Spatial Relations}: relative position, orientation, and layout.
    \item \textbf{Non-Spatial Relations}: interactions and action relations between entities.
\end{itemize}

\vspace{0.5em}

\textbf{Output Requirements}

\vspace{0.5em}

Output only a valid JSON object containing exactly two top-level fields: "sub\_questions" and "final\_score". Do not include Markdown code fences or additional text.

\par\vspace{0.7em}
\noindent
\begingroup
\renewcommand{\arraystretch}{1.08}
\begin{tabular}{@{}l@{}}
\{ \\
\hspace*{1.5em}"sub\_questions": [ \\
\hspace*{3em}\{ \\
\hspace*{4.5em}"id": 1, \\
\hspace*{4.5em}"question": "A decomposed sub-question", \\
\hspace*{4.5em}"answer": "True or False", \\
\hspace*{4.5em}"explanation": "Brief error explanation if False; otherwise an empty string" \\
\hspace*{3em}\} \\
\hspace*{1.5em}], \\
\hspace*{1.5em}"final\_score": x \\
\}
\end{tabular}
\endgroup
\par\vspace{0.7em}

Replace the placeholders above with actual evaluation results. Include one entry for each distinct decomposed sub-question, with consecutive IDs starting at 1.

\vspace{0.5em}

Here, $\mathrm{x} = \mathrm{C}/\mathrm{N}$, where $\mathrm{N}$ is the total number of entries and $\mathrm{C}$ is the number answered "True". Output $\mathrm{x}$ as a JSON number rounded to two decimal places.

\vspace{0.5em}

\textbf{Inputs for This Evaluation}

\vspace{0.5em}

\textbf{Text Prompt:} \$ \{ prompt \}

\vspace{0.5em}

The attached image is the Generated Image. \textless image\textgreater

\end{tcolorbox}

\begin{tcolorbox}[
    title = {Rule-Governed World Modeling Verifier Prompt-Pairwise},
    breakable,
    colback = green!3,
    colframe = green!45!black,
    coltitle = white,
    colbacktitle = green!45!black,
    fontupper = \small,
    fonttitle = \bfseries\small,
    arc = 1.5mm,
    boxrule = 0.7pt
]

You are a powerful and trustworthy \textbf{Image Editing Evaluator}.

\vspace{0.5em}

Your responsibility is to evaluate whether an image editing model successfully completes a game-based image editing task according to the provided rules.

\vspace{0.5em}

\textbf{Inputs}

\vspace{0.5em}

You will be given the following inputs:

\begin{itemize}
    \item \textbf{Initial Image}: the original game state given to the image editing model.
    \item \textbf{Generated Image}: the output produced by the image editing model.
    \item \textbf{Reference Image}: the ground-truth result that correctly follows all rules.
    \item \textbf{Game Rules}: the rules governing the game and editing constraints.
\end{itemize}

\vspace{0.5em}

\textbf{Evaluation Workflow}

\vspace{0.5em}

\textbf{Step 1: Rule Decomposition}

\vspace{0.4em}

Independently decompose the Game Rules into atomic, visually checkable rules, even if they are already decomposed. The resulting rules must remain consistent with the provided Game Rules.

\vspace{0.5em}

\textbf{Step 2: Rule Violation Checking}

\vspace{0.4em}

Use the Reference Image as the correct standard. \textbf{Compare the Generated Image with the Reference Image} rule by rule, and determine whether each rule is violated.

\vspace{0.5em}

\textbf{Step 3: Scoring}

\vspace{0.4em}

Let the total number of decomposed atomic rules be $\mathrm{N}$, and let the number of violated rules be $\mathrm{V}$. The score of the Generated Image is:

\[
\textnormal{Score} = \frac{\mathrm{N} - \mathrm{V}}{\mathrm{N}}.
\]

The final score should reflect how consistently the Generated Image follows the rules compared with the Reference Image.

\vspace{0.5em}

\textbf{Output Requirements}

\vspace{0.5em}

Output only a valid JSON object with exactly two top-level fields: "sub\_questions" and "final\_score". Do not include Markdown code fences or additional text.

\par\vspace{0.7em}
\noindent
\begingroup
\renewcommand{\arraystretch}{1.08}
\begin{tabular}{@{}l@{}}
\{ \\
\hspace*{1.5em}"sub\_questions": [ \\
\hspace*{3em}\{ \\
\hspace*{4.5em}"id": 1, \\
\hspace*{4.5em}"question": "A decomposed rule sub-question", \\
\hspace*{4.5em}"answer": "True or False", \\
\hspace*{4.5em}"explanation": "Brief rule-violation explanation if False; otherwise an empty string" \\
\hspace*{3em}\} \\
\hspace*{1.5em}], \\
\hspace*{1.5em}"final\_score": x \\
\}
\end{tabular}
\endgroup
\par\vspace{0.7em}

Replace the placeholders above with actual evaluation results. Include one entry for each distinct decomposed atomic rule, with consecutive IDs starting at 1.

\vspace{0.5em}

Here, $\mathrm{x}$ is the score calculated in Step 3, expressed as a JSON number rounded to two decimal places.

\vspace{0.5em}

\textbf{Inputs for This Evaluation}

\vspace{0.5em}

\textbf{Game Rules:} \$ \{ game\_rules \}

\vspace{0.5em}

Images are attached in this order: Initial Image, Generated Image, Reference Image. \textless image\textgreater \textless image\textgreater \textless image\textgreater

\end{tcolorbox}

\begin{tcolorbox}[
    title = {Rule-Governed World Modeling Verifier Prompt-Pointwise},
    breakable,
    colback = green!3,
    colframe = green!45!black,
    coltitle = white,
    colbacktitle = green!45!black,
    fontupper = \small,
    fonttitle = \bfseries\small,
    arc = 1.5mm,
    boxrule = 0.7pt
]

You are a powerful and trustworthy \textbf{Image Editing Evaluator}.

\vspace{0.5em}

Your responsibility is to evaluate whether an image editing model successfully completes a game-based image editing task according to the provided rules.

\vspace{0.5em}

\textbf{Inputs}

\vspace{0.5em}

You will be given the following inputs:

\begin{itemize}
    \item \textbf{Initial Image}: the original game state given to the image editing model.
    \item \textbf{Generated Image}: the output produced by the image editing model.
    \item \textbf{Game Rules}: the rules governing the game and editing constraints.
\end{itemize}

\vspace{0.5em}

\textbf{Evaluation Workflow}

\vspace{0.5em}

\textbf{Step 1: Rule Decomposition}

\vspace{0.4em}

Independently decompose the Game Rules into atomic, visually checkable rules, even if they are already decomposed. The resulting rules must remain consistent with the provided Game Rules.

\vspace{0.5em}

\textbf{Step 2: Rule Violation Checking}

\vspace{0.4em}

Use the Initial Image and the Game Rules to \textbf{infer the expected final state}. \textbf{Compare the Generated Image with this expected state} rule by rule, and determine whether each rule is violated.

\vspace{0.5em}

\textbf{Step 3: Scoring}

\vspace{0.4em}

Let the total number of decomposed atomic rules be $\mathrm{N}$, and let the number of violated rules be $\mathrm{V}$. The score of the Generated Image is:

\[
\textnormal{Score} = \frac{\mathrm{N} - \mathrm{V}}{\mathrm{N}}.
\]

The final score should reflect how consistently the Generated Image follows the Game Rules given the Initial Image.

\vspace{0.5em}

\textbf{Output Requirements}

\vspace{0.5em}

Output only a valid JSON object with exactly two top-level fields: "sub\_questions" and "final\_score". Do not include Markdown code fences or additional text.

\par\vspace{0.7em}
\noindent
\begingroup
\renewcommand{\arraystretch}{1.08}
\begin{tabular}{@{}l@{}}
\{ \\
\hspace*{1.5em}"sub\_questions": [ \\
\hspace*{3em}\{ \\
\hspace*{4.5em}"id": 1, \\
\hspace*{4.5em}"question": "A decomposed rule sub-question", \\
\hspace*{4.5em}"answer": "True or False", \\
\hspace*{4.5em}"explanation": "Brief rule-violation explanation if False; otherwise an empty string" \\
\hspace*{3em}\} \\
\hspace*{1.5em}], \\
\hspace*{1.5em}"final\_score": x \\
\}
\end{tabular}
\endgroup
\par\vspace{0.7em}

Replace the placeholders above with actual evaluation results. Include one entry for each distinct decomposed atomic rule, with consecutive IDs starting at 1.

\vspace{0.5em}

Here, $\mathrm{x}$ is the score calculated in Step 3, expressed as a JSON number rounded to two decimal places.

\vspace{0.5em}

\textbf{Inputs for This Evaluation}

\vspace{0.5em}

\textbf{Game Rules:} \$ \{ game\_rules \}

\vspace{0.5em}

Images are attached in this order: Initial Image, Generated Image. \textless image\textgreater \textless image\textgreater

\end{tcolorbox}

\begin{tcolorbox}[
    title = {Text-to-Image Sub-Question Verifier Prompt},
    breakable,
    colback = orange!5,
    colframe = orange!55!black,
    coltitle = white,
    colbacktitle = orange!55!black,
    fontupper = \small,
    fonttitle = \bfseries\small,
    arc = 1.5mm,
    boxrule = 0.7pt
]

Evaluate the following sub-question about the Generated Image.

\vspace{0.5em}

Please carefully analyze the Generated Image and determine whether it satisfies the requirement expressed in the sub-question. Base your judgment solely on the visual evidence in the Generated Image.

\vspace{0.5em}

If the Generated Image satisfies this requirement, answer "True"; otherwise, answer "False". Evaluate only the requirement expressed in this sub-question.

\vspace{0.5em}

Respond strictly with a JSON object containing exactly these two fields:

\par\vspace{0.7em}
\noindent
\begingroup
\renewcommand{\arraystretch}{1.08}
\begin{tabular}{@{}l@{}}
\{ \\
\hspace*{1.5em}"answer": "True or False", \\
\hspace*{1.5em}"explanation": "Brief error explanation if False; otherwise an empty string" \\
\}
\end{tabular}
\endgroup
\par\vspace{0.7em}

Use the string "True" or "False" for "answer". If the answer is "False", briefly describe the specific mismatch or missing visual evidence in "explanation"; otherwise, use an empty string. Do not include any additional text.

\vspace{0.5em}

\textbf{Inputs for This Evaluation}

\vspace{0.5em}

\textbf{Sub-question:} \$ \{ question \}

\vspace{0.5em}

The attached image is the Generated Image. \textless image\textgreater

\end{tcolorbox}

\begin{tcolorbox}[
    title = {Rule-Governed World Modeling Sub-Question Verifier Prompt-Pairwise},
    breakable,
    colback = orange!5,
    colframe = orange!55!black,
    coltitle = white,
    colbacktitle = orange!55!black,
    fontupper = \small,
    fonttitle = \bfseries\small,
    arc = 1.5mm,
    boxrule = 0.7pt
]

Evaluate the following sub-question about the Generated Image.

\vspace{0.5em}

Please carefully \textbf{compare the Generated Image with the Reference Image} and determine whether the Generated Image satisfies the requirement expressed in the sub-question. Use the Reference Image as the correct standard, with the Initial Image and Game Rules providing the task context.

\vspace{0.5em}

If the Generated Image satisfies this requirement, answer "True"; otherwise, answer "False". Evaluate only the requirement expressed in this sub-question.

\vspace{0.5em}

Respond strictly with a JSON object containing exactly these two fields:

\par\vspace{0.7em}
\noindent
\begingroup
\renewcommand{\arraystretch}{1.08}
\begin{tabular}{@{}l@{}}
\{ \\
\hspace*{1.5em}"answer": "True or False", \\
\hspace*{1.5em}"explanation": "Brief error explanation if False; otherwise an empty string" \\
\}
\end{tabular}
\endgroup
\par\vspace{0.7em}

Use the string "True" or "False" for "answer". If the answer is "False", briefly describe the specific mismatch or missing visual evidence in "explanation"; otherwise, use an empty string. Do not include any additional text.

\vspace{0.5em}

\textbf{Inputs for This Evaluation}

\vspace{0.5em}

\textbf{Sub-question:} \$ \{ question \}

\vspace{0.5em}

\textbf{Game Rules:} \$ \{ game\_rules \}

\vspace{0.5em}

Images are attached in this order: Initial Image, Generated Image, Reference Image. \textless image\textgreater \textless image\textgreater \textless image\textgreater

\end{tcolorbox}

\begin{tcolorbox}[
    title = {Rule-Governed World Modeling Sub-Question Verifier Prompt-Pointwise},
    breakable,
    colback = orange!5,
    colframe = orange!55!black,
    coltitle = white,
    colbacktitle = orange!55!black,
    fontupper = \small,
    fonttitle = \bfseries\small,
    arc = 1.5mm,
    boxrule = 0.7pt
]

Evaluate the following sub-question about the Generated Image.

\vspace{0.5em}

Please analyze the Initial Image, then apply the Game Rules to \textbf{infer the expected final state}. Carefully \textbf{compare the Generated Image with this expected state} and determine whether the Generated Image satisfies the requirement expressed in the sub-question.

\vspace{0.5em}

If the Generated Image satisfies this requirement, answer "True"; otherwise, answer "False". Evaluate only the requirement expressed in this sub-question.

\vspace{0.5em}

Respond strictly with a JSON object containing exactly these two fields:

\par\vspace{0.7em}
\noindent
\begingroup
\renewcommand{\arraystretch}{1.08}
\begin{tabular}{@{}l@{}}
\{ \\
\hspace*{1.5em}"answer": "True or False", \\
\hspace*{1.5em}"explanation": "Brief error explanation if False; otherwise an empty string" \\
\}
\end{tabular}
\endgroup
\par\vspace{0.7em}

Use the string "True" or "False" for "answer". If the answer is "False", briefly describe the specific mismatch or missing visual evidence in "explanation"; otherwise, use an empty string. Do not include any additional text.

\vspace{0.5em}

\textbf{Inputs for This Evaluation}

\vspace{0.5em}

\textbf{Sub-question:} \$ \{ question \}

\vspace{0.5em}

\textbf{Game Rules:} \$ \{ game\_rules \}

\vspace{0.5em}

Images are attached in this order: Initial Image, Generated Image. \textless image\textgreater \textless image\textgreater

\end{tcolorbox}

\end{document}